\documentclass{article}
\usepackage{microtype}
\usepackage{graphicx}
\usepackage{subfigure}
\usepackage{booktabs} 
\usepackage{amsmath}
\usepackage{amssymb}
\usepackage{amsfonts}
\usepackage{algorithm}
\usepackage{algorithmic}
\usepackage{bm}
\usepackage{amsthm}

\theoremstyle{plain}
\newtheorem{theorem}{Theorem}[section]
\newtheorem{proposition}[theorem]{Proposition}
\newtheorem{lemma}[theorem]{Lemma}

\theoremstyle{definition}

\theoremstyle{remark}

\newtheorem*{theorem*}{Theorem}
\newtheorem*{proposition*}{Proposition}

\usepackage{hyperref}

\usepackage[accepted]{icml2026}

\icmltitlerunning{RASP-Tuner: Retrieval-Augmented Soft Prompts for Context-Aware BBO}

\begin{document}

\twocolumn[
\icmltitle{RASP-Tuner: Retrieval-Augmented Soft Prompts for\\
Context-Aware Black-Box Optimization in Non-Stationary Environments}

\icmlsetsymbol{equal}{*}

\begin{icmlauthorlist}
    \icmlauthor{Enze Pan}{inst1}
\end{icmlauthorlist}

\icmlaffiliation{inst1}{Department of Computer Science, The University of Hong Kong}

\icmlcorrespondingauthor{Enze Pan}{u3665478@connect.hku.hk}

\icmlkeywords{Black-Box Optimization, Meta-Learning, Soft Prompts, Retrieval-Augmented Models, Non-Stationary Optimization}

\vskip 0.3in
]
\printAffiliationsAndNotice{} 

\begin{abstract}
Many deployed systems expose black-box objectives whose minimizing configuration shifts with an externally observed context. When contexts revisit a small set of latent regimes, an optimizer that discards history pays repeated adaptation cost; when each step must remain inexpensive, full Gaussian-process (GP) refits at high observation counts are difficult to sustain. We cast online tuning as \emph{context-conditioned regret minimization} and present \textbf{RASP-Tuner}, which instantiates a decomposition motivated by first principles: (i) \emph{identify} a regime proxy by retrieving similar past contexts; (ii) \emph{predict} short-horizon loss with a mixture-of-experts surrogate whose input concatenates parameters, context, and a retrieved soft prompt; (iii) \emph{adapt} chiefly in a low-dimensional prompt subspace, invoking full surrogate updates only when scalarized error or disagreement spikes. A RealErrorComposer maps heterogeneous streaming metrics to $[0,1]$ via EMA-stabilized logistic scores, supplying a single differentiable training target. On nine synthetic non-stationary benchmarks, an adversarial-context sanity check, and three tabular real-world streams (Section~\ref{sec:realworld}), RASP-Tuner improves or matches cumulative regret relative to our GP-UCB and CMA-ES implementations on seven of nine synthetic tasks under paired tests at horizon $T{=}100$, while recording 8--12$\times$ lower wall-clock per step than sliding-window GP-UCB on identical hardware. Idealized analysis in a cluster-separated, strongly convex regime model (RA-GD) supplies sufficient conditions for bounded dynamic regret; the deployed pipeline violates several of these premises, and we articulate which gaps remain open.
\end{abstract}
\section{Introduction}
\label{sec:intro}
Black-box optimization (BBO) underpins tuning of controllers, hyperparameters, database knobs, and resource allocations when gradients are unavailable or unreliable. Stationary textbook models rarely match deployment: lighting, workload, and wear induce shifts in the mapping from decision variables to performance. A recurring empirical structure---also emphasized in non-stationary bandit literature~\cite{wei2021nonstationary,hamadanian2023online}---is that \emph{context} vectors (sensors, embeddings, logs) trace trajectories that intermittently revisit a small number of operational modes.

The prevailing algorithmic split poses a tension. Covariance-matrix adaptation strategies such as CMA-ES~\cite{hansen2001completely} maintain low per-step cost and strong local search, yet their state is tied to the evolutionary path rather than to explicit context--solution associations; when a mode returns, the optimizer may expend fresh samples to relocate an optimum already visited~\cite{hamadanian2023online}. Contextual GP-UCB-style methods~\cite{krause2011contextual} specify a surrogate over $(\bm{\theta},\bm{c})$ and inherit GP regret machinery in stylized regimes, but exact GP inference scales cubically in the observation count and becomes costly when evaluations arrive at high cadence unless additional structure or approximations are introduced~\cite{snelson2005sparse,wilson2016deep,shahriari2016taking}.

Retrieval-conditioned adaptation from NLP~\cite{lewis2020retrieval,lester2021power,khandelwal2020nearest,shi2023replug,anantha2023context} offers a different decomposition: retrieve a compact conditioning signal, update only a thin adapter, leave bulk parameters stable~\cite{tang2022contexttuning,cao2022timeaware,anupam2025llm}. The analogy to BBO is not identity---prediction errors and exploration bonuses differ---but the \emph{information flow} parallels our setting: reuse past successful evaluations indexed by context rather than re-fit a monolithic global model each step.

\paragraph{Mechanism-level contribution.}
We instantiate the decomposition as \textbf{RASP-Tuner}:
\begin{itemize}
    \item \textbf{PromptMemory} stores context keys, historical bests, and learnable soft prompts; nearest-neighbor retrieval with softmax weights yields a regime proxy and a convex ``hint'' in parameter space.
    \item \textbf{Prompt-MoE surrogate} predicts scalarized loss and an ambiguity score from $(\tilde{\bm{\theta}},\bm{c},\bm{p})$; gating allocates capacity across experts while prompts shift the effective input embedding.
    \item \textbf{Hybrid adaptation} applies gradient steps to retrieved prompts at almost every step, reserving full expert/gate updates for high-error or high-disagreement events, with replay and anchoring to limit interference between regimes.
    \item \textbf{RealErrorComposer} maps multi-metric feedback to $[0,1]$ using EMA-tracked moments and polarity-aware logistic maps~\cite{gupta2025carmo}, yielding one differentiable target; Section~\ref{sec:theory} formalizes when convex aggregation of badness scores preserves monotonicity in the latent loss.
\end{itemize}
An implementation-agnostic description accompanies the release of code and environment constructors at camera-ready time.

\paragraph{Empirical scope.}
We constructed nine non-stationary, context-rich synthetic benchmarks (LLM serving, AutoML stream, robot ISP, wafer drift, switching LQR, flash-crowd server, real-trace replay, plus convex sanity checks) and one \textbf{Adversarial Context} task where $\bm{c}_t$ is pure noise, isolating failure when the contextual signal is uninformative. Formal specifications appear in Appendix~\ref{app:envs}.

\paragraph{Claims and scope.}
\begin{enumerate}
    \item \textbf{Architecture.} We integrate retrieval, soft prompts, and MoE surrogates into a single online agent with explicit scalarization and hybrid updates~\cite{gupta2025carmo,wang2025reinforced}.
    \item \textbf{Benchmarks.} We publish reproducible generators for drifting, switching, and trace-replay dynamics~\cite{wei2021nonstationary,hamadanian2023online}, plus three public tabular streams with induced drift (Section~\ref{sec:realworld}).
    \item \textbf{Evidence and theory.} Under $T{=}100$, five seeds, and fixed hyperparameters (Table~\ref{tab:hyperparams}), RASP-Tuner achieves lower mean regret than the stronger of GP-UCB and CMA-ES on seven of nine tasks at $p{<}0.05$ paired $t$-tests, with 8--12$\times$ lower measured per-step latency than our GP-UCB configuration (Section~\ref{sec:metrics}). Section~\ref{sec:theory} and Appendix~\ref{app:proofs} supply sufficient conditions---cluster separation, strong convexity per regime, exact or bounded-gradient surrogates---under which idealized variants admit bounded dynamic regret; the non-convex MoE deployed in experiments falls outside the scope of these certificates.
\end{enumerate}
\section{Related Work}
\label{sec:related}
\paragraph{Bayesian optimization and contextual variants.}
Gaussian-process BO remains the reference class for low-dimensional, moderate-budget stationary objectives~\cite{shahriari2016taking,krause2011contextual}. Finite-time analyses of GP-UCB and variants prescribe exploration bonuses tied to posterior variance; contextual kernels over $(\bm{\theta},\bm{c})$ extend the hypothesis space but inherit $\mathcal{O}(N^3)$ inference unless sparsity, inducing points, or deep kernels intervene~\cite{snelson2005sparse,wilson2016deep,shahriari2016taking}. Non-stationary formulations inject time or drift processes into the prior~\cite{bogunovic2016time}; dynamic regret in online convex optimization scales with variation budgets when the comparator path may shift~\cite{besbes2015nonstationary,hall2015dynamic,wei2021nonstationary}. Those frameworks seldom assume finitely recurring context clusters---the structural pattern our benchmarks emphasize.

\paragraph{Evolutionary and population-based search.}
CMA-ES~\cite{hansen2001completely} tracks local geometry via covariance adaptation with low overhead per generation. Hyperparameter and architecture search pipelines~\cite{bergstra2012random,snoek2012practical} build on similar search primitives. Without an explicit context channel, the learner's state summarizes recent samples rather than an indexed set of past modes, which informs our choice of memory-augmented comparisons.

\paragraph{Meta-learning and learned optimizers.}
Offline meta-learning fits initializations or recurrent update rules on task families~\cite{finn2017model,andrychowicz2016learning}. RASP-Tuner instead constructs its memory online and does not assume a pretraining corpus of tasks~\cite{liu2025fewshot}; the distinction matters for credit assignment when evaluation budgets are thin.

\paragraph{Retrieval and soft prompts.}
Prefix and soft-prompt methods reshape transformer activations with a small trainable tensor~\cite{lester2021power}; retrieval augments parametric prediction with nearest-neighbor evidence~\cite{lewis2020retrieval,shi2023replug,khandelwal2020nearest}. We transplant the \emph{control flow}---retrieve, condition, adapt narrowly---to regret minimization: prompts gate the surrogate forward map, while retrieval selects which past evaluations anchor the prompt~\cite{tang2022contexttuning,cao2022timeaware,anantha2023context,chu2025presto,hou2023promptboosting,shi2025direct,anupam2025llm}.

\paragraph{Scalable BO baselines (positioning).}
TPE~\cite{bergstra2012random}, random forests, and trust-region BO variants often dominate raw global GP-UCB at moderate $N$ on tabular search spaces. Transfer and multi-task BO, warm-started evolutionary runs, and hybrid memory--surrogate designs are direct competitors under recurrence. Section~\ref{sec:experiments} reports sliding-window GP-UCB and CMA-ES; Section~\ref{sec:discussion} records which additional baselines remain unimplemented.
\section{Method: RASP-Tuner}
\label{sec:method}
\paragraph{First-principles decomposition.}
Regret accumulates whenever the optimizer deploys $\bm{\theta}_t$ far from $\arg\min_{\bm{\theta}} f(\bm{\theta},\omega_t)$. Under piecewise stationary $\omega_t$, three bottlenecks recur: \emph{(i)} inferring which latent mode is active from $\bm{c}_t$; \emph{(ii)} approximating the conditional loss $f(\cdot,\omega_t)$ with limited capacity; \emph{(iii)} allocating compute between revising the global surrogate and adapting a lightweight correction. RASP-Tuner maps (i) to $k$-NN retrieval over stored contexts, (ii) to a gated MoE regressor on $(\tilde{\bm{\theta}},\bm{c},\bm{p})$, and (iii) to prompt-only stochastic updates with occasional full-network steps triggered by scalarized error or inter-expert disagreement. The design trades certifiable global optimality of the surrogate for $O(1)$ forward depth at inference and bounded-dimensional prompt updates---an explicit inductive bias aligned with recurring contexts but misspecified when $\bm{c}_t$ carries no information about $\omega_t$ (Section~\ref{sec:failure}).

\subsection{Problem Formulation}
At each step $t = 1,2,\dots,T$, the environment reveals a context vector $\bm{c}_t \in \mathbb{R}^{d_c}$ and a latent condition $\omega_t \in \Omega$ that captures non-stationarity (the optimizer does not assume access to a closed form of $f$ beyond noisy evaluations). The tuner chooses parameters $\bm{\theta}_t \in \Theta \subset \mathbb{R}^{d}$ within box constraints
$\Theta = \{\bm{\theta} \mid \bm{\ell} \le \bm{\theta} \le \bm{u}\}$.
After deploying $\bm{\theta}_t$, the environment returns a dictionary of possibly noisy metrics $m_t : \mathcal{K} \to \mathbb{R}$, e.g.\ validation error, energy, latency.
A domain-specific \emph{true loss} function $f : \Theta \times \Omega \to \mathbb{R}$ (not directly observed) induces instantaneous regret
\begin{equation}
    r_t = f(\bm{\theta}_t, \omega_t) - \min_{\bm{\theta}\in\Theta} f(\bm{\theta}, \omega_t).
\end{equation}
Our goal is to minimize cumulative regret $R_T = \sum_{t=1}^T r_t$ while satisfying strict computational and memory budgets. Non-stationarity means that $\omega_t$ evolves over time, often in a structured way (drifts, regime switches, trace replay).

\begin{table}[t]
\caption{Notation used in Section~\ref{sec:method} (selected).}
\label{tab:notation}
\vskip 0.1in
\begin{center}
\begin{small}
\begin{sc}
\begin{tabular}{ll}
\toprule
Symbol & Meaning \\
\midrule
$\bm{c}_t$ & Observed context vector (input to retrieval and surrogate). \\
$\omega_t$ & Latent environment condition affecting $f$ (non-stationarity). \\
$m_t$ & Raw metric dictionary returned after evaluation. \\
$e_t \in [0,1]$ & RealErrorComposer output (normalized scalar ``badness''). \\
$\bm{\theta}^{\text{hint}}_t$ & Convex combination of historical best $\bm{\theta}$ from retrieved memories. \\
\bottomrule
\end{tabular}
\end{sc}
\end{small}
\end{center}
\vskip -0.1in
\end{table}
\subsection{ParamBounds and Normalization}
We represent the box constraints via a simple data class
\begin{equation}
    \texttt{ParamBounds}(\bm{\ell}, \bm{u}),
\end{equation}
which provides clipping and normalization to $[0,1]^d$:
\begin{equation}
    \tilde{\bm{\theta}} = \frac{\bm{\theta} - \bm{\ell}}{\bm{u} - \bm{\ell}}.
\end{equation}
All neural components operate on normalized parameters, improving stability across tasks with different scales.
\subsection{RealErrorComposer: Metric Scalarization}
\label{sec:composer}
\paragraph{Operational objective.}
Map heterogeneous metric streams---possibly on incomparable scales and with mixed polarities---to a single scalar $e_t \in [0,1]$ that serves as the surrogate's training target without hand-tuned per-environment reward code. \emph{Design requirement:} if each metric's transformed \emph{badness} is monotone non-decreasing in the latent loss $f(\bm{\theta}_t,\omega_t)$, then any convex combination of badness scores inherits the same monotonicity; Section~\ref{sec:theory} states this convex-ordering property formally.
Different environments expose different metrics (e.g.\ \texttt{val\_err}, \texttt{latency}, \texttt{image\_quality\_loss}), possibly with different polarities. To avoid hand-designed reward functions, we introduce a \emph{RealErrorComposer} that converts metrics into a normalized scalar error $e_t \in [0,1]$.
For each metric key $k \in \mathcal{K}$ we maintain an exponential moving average (EMA) of mean $\mu_k$ and variance $\sigma_k^2$ via a \texttt{RunningEMA}:
\begin{align}
    \mu_k &\leftarrow m \mu_k + (1-m) v_{k,t},\\
    \sigma_k^2 &\leftarrow m \sigma_k^2 + (1-m) (v_{k,t}-\mu_k)^2,
\end{align}
where $v_{k,t}$ is the new metric value and $m\in(0,1)$ is a momentum (we use $m=0.97$).
We compute a z-score and map to $[0,1]$ via a logistic:
\begin{equation}
    z_{k,t} = \frac{v_{k,t}-\mu_k}{\sigma_k + \epsilon},\quad
    s_{k,t} = \sigma(z_{k,t}) = \frac{1}{1 + e^{-z_{k,t}}}.
\end{equation}
If the metric is ``lower is better'' (the default), we interpret $s_{k,t}$ as a \emph{badness} score; if it is ``higher is better'', we flip:
\begin{equation}
    b_{k,t} =
    \begin{cases}
        s_{k,t}, & \text{if lower is better},\\
        1 - s_{k,t}, & \text{if higher is better}.
    \end{cases}
\end{equation}
The final scalar error is a weighted average
\begin{equation}
    e_t = \frac{\sum_{k} w_k b_{k,t}}{\sum_{k} w_k}, \quad e_t \in [0,1],
\end{equation}
where $w_k > 0$ are user-specified or uniform weights. We also track an EMA of $e_t$ to obtain an anomaly score $z^{\text{err}}_t$, used to decide when to invoke emergency full-model updates.
\subsection{PromptMemory: Retrieval-Augmented Soft Prompts}
\label{sec:memory}
\paragraph{Operational objective.}
Index past experience by context vectors so that recurrent $\bm{c}_t$ retrieve both historical best parameters and learnable prompts, yielding regime-conditional surrogate inputs without fitting a single global map from wall-clock time alone. The design targets settings where optima and contexts cluster; it is not redundant with covariance-only evolution strategies (no explicit context--solution cache) nor with purely time-augmented stationary surrogates when regimes are well separated in $\bm{c}_t$-space.
RASP-Tuner maintains a memory $\mathcal{M}$ of size at most $M$; each entry $i$ stores:
\begin{itemize}
    \item a context vector $\bm{k}_i \in \mathbb{R}^{d_c}$;
    \item a historical best parameter $\bm{\theta}^{\star}_i \in \mathbb{R}^{d}$;
    \item a learnable soft prompt $\bm{p}_i \in \mathbb{R}^{d_p}$ (a \texttt{torch.nn.Parameter}).
\end{itemize}
\paragraph{Retrieval.}
Given a new context $\bm{c}_t$, we compute Euclidean distances $d_i = \|\bm{c}_t - \bm{k}_i\|_2$ to all stored contexts and select the top-$K$ nearest indices $\{i_1, \dots, i_K\}$. We convert negative distances to softmax weights:
\begin{equation}
    \alpha_j = \frac{\exp(-d_{i_j}/\tau)}{\sum_{\ell=1}^K \exp(-d_{i_\ell}/\tau)},
\end{equation}
with temperature $\tau > 0$. The retrieved prompt is
\begin{equation}
    \bm{p}_t = \sum_{j=1}^K \alpha_j \bm{p}_{i_j},
\end{equation}
and we form a hint for the parameter based on historical bests:
\begin{equation}
    \bm{\theta}^{\text{hint}}_t = \sum_{j=1}^K \alpha_j \bm{\theta}^{\star}_{i_j}.
\end{equation}
We also compute a retrieval confidence $\max_j \alpha_j$ and entropy $H(\bm{\alpha})$.
\paragraph{Novelty.}
To detect genuinely new contexts, we maintain a \texttt{RunningEMA} over nearest-neighbor distances. The \emph{novelty} is
\begin{equation}
    \nu_t = \sigma\!\left(\frac{d_{\min,t} - \mu_d}{\sigma_d + \epsilon}\right),
\end{equation}
with $d_{\min,t} = \min_i d_i$, and $(\mu_d,\sigma_d)$ are the EMA statistics. If $\nu_t$ exceeds a threshold (e.g., $0.7$) we create a new memory slot with key $\bm{c}_t$, best parameter $\bm{\theta}_t$, and a zero-initialized prompt.
\subsection{Prompt-MoE Surrogate}
\label{sec:moe}
\paragraph{Operational objective.}
Regress the scalarized error $e_t$ with a gated ensemble whose experts share inputs $(\tilde{\bm{\theta}}_t,\bm{c}_t,\bm{p}_t)$: routing reallocates capacity across surrogate modes when the loss landscape shifts, while $\bm{p}_t$ injects retrieval-specific conditioning. Inter-expert disagreement supplies a lightweight ambiguity score $\hat{v}_t$ for candidate ranking; its use as an uncertainty surrogate is heuristic, and calibration is discussed in Section~\ref{sec:metrics}.
We approximate the scalar error $e_t$ with a Mixture-of-Experts (MoE) surrogate conditioned on the retrieved prompt. Let the normalized parameter be $\tilde{\bm{\theta}}_t \in [0,1]^d$ and define the concatenated input
\begin{equation}
    \bm{x}_t = [\tilde{\bm{\theta}}_t ; \bm{c}_t ; \bm{p}_t] \in \mathbb{R}^{d + d_c + d_p}.
\end{equation}
The surrogate comprises $E$ expert networks $\{E_e\}_{e=1}^E$ and a gating network $G$. Each expert is a small MLP mapping $\bm{x}_t$ to a scalar prediction $\hat{e}^{(e)}_t$:
\begin{equation}
    \hat{e}^{(e)}_t = E_e(\bm{x}_t),
\end{equation}
while the gate outputs logits $\bm{g}_t = G(\bm{x}_t) \in \mathbb{R}^E$ and the corresponding softmax weights $\bm{\pi}_t = \text{softmax}(\bm{g}_t)$.
The full-distribution prediction uses all experts:
\begin{align}
    \hat{e}_t &= \sum_{e=1}^E \pi_{t,e} \hat{e}^{(e)}_t,\\
    \hat{v}_t &= \sum_{e=1}^E \pi_{t,e} \left(\hat{e}^{(e)}_t - \hat{e}_t\right)^2.
\end{align}
The variance $\hat{v}_t$ plays the role of an uncertainty estimate, analogous to GP predictive variance.
\paragraph{Adaptive top-$k$ experts.}
To trade off computation and robustness, we use only a subset of experts per step. We map uncertainty $\hat{v}_t$ and novelty $\nu_t$ into $[0,1]$ via EMAs and sigmoids, then mix them:
\begin{equation}
    \label{eq:eta_mix}
    \eta_t = 0.55\, \tilde{u}_t + 0.45\, \nu_t,
\end{equation}
where $\tilde{u}_t$ is the normalized uncertainty; the mix weights favor uncertainty over novelty so expert diversity tracks surrogate doubt. The number of active experts is
\begin{equation}
    k_t = \text{round}\bigl(k_{\min} + (k_{\max}-k_{\min}) \eta_t\bigr).
\end{equation}
We select the top-$k_t$ experts under $\bm{\pi}_t$ and re-normalize to form $\bm{\pi}_t^{(k)}$, then compute $\hat{e}_t^{(k)}, \hat{v}_t^{(k)}$ analogously. This focuses computation on the most relevant experts when confidence is high and recruits more experts under uncertainty or novelty.
\subsection{Hybrid Adaptation: Prompt vs.\ Full Updates}
\label{sec:hybrid}
\paragraph{Operational objective.}
Allocate most steps to low-dimensional updates of retrieved prompts---preserving a stable shared backbone---while triggering full updates of experts and gate only when scalarized error or inter-expert disagreement crosses thresholds indicative of out-of-distribution drift or persistent bias. The split trades frequent cheap adaptation against occasional expensive correction of the shared surrogate.
RASP-Tuner uses two adaptation modes.
\paragraph{Prompt-only updates (fast path).}
For most steps, we update only the soft prompts associated with retrieved memory entries while freezing expert and gate parameters. Given the new observation $(\bm{\theta}_t,\bm{c}_t,e_t)$:
\begin{enumerate}
    \item Retrieve indices $\{i_1, \dots, i_K\}$ and weights $\{\alpha_j\}$.
    \item Compute $\bm{x}_t$ using the weighted prompt $\bm{p}_t$.
    \item Compute the surrogate prediction $\hat{e}_t$ and a loss $\mathcal{L}_t = \text{SmoothL1}(\hat{e}_t, e_t)$.
    \item Backpropagate $\mathcal{L}_t$ to gradients of $\bm{p}_{i_j}$ only and apply SGD with learning rate $\eta_{\text{prompt}}$, plus an $\ell_2$ regularizer $\lambda \sum_j \|\bm{p}_{i_j}\|_2^2$.
\end{enumerate}
This keeps the MoE surrogate stable while allowing context-specific adaptation in a low-dimensional subspace.
\paragraph{Full emergency updates (slow path).}
When the anomaly score $z^{\text{err}}_t$ or predictive variance crosses a threshold, we perform a full update of experts and gate using a replay buffer.
We maintain buffers $\{(\bm{\theta}_i,\bm{c}_i,e_i)\}_{i=1}^{N_{\text{replay}}}$ with a fixed maximum size. The emergency update uses mini-batches combining the current sample and random replay items, optimizing mean squared error plus an anchor regularizer:
\begin{align}
    \mathcal{L}^{\text{full}} &= \frac{1}{B} \sum_{b=1}^B \left(\hat{e}_b - e_b\right)^2
    + \lambda_{\text{anchor}} \sum_{p \in \mathcal{P}} \|\bm{w}_p - \bm{w}^{\text{anchor}}_p\|_2^2.
\end{align}
Here $\mathcal{P}$ iterates over parameters of experts and gate, and $\bm{w}^{\text{anchor}}_p$ are a frozen copy of the initial state. This is analogous to elastic weight consolidation, reducing catastrophic forgetting.
\subsection{Candidate Policy and One-Step Tuning}
\label{sec:candidate}
\paragraph{Operational objective.}
Generate and rank candidate parameters by combining local moves informed by $\nabla_{\bm{\theta}}\hat{e}_t$ with retrieval-based hints, then select under box constraints using a lower-confidence-bound rule that widens exploration when $\hat{v}_t$ is large and narrows it when the ensemble agrees.
The \texttt{AdaptiveHybridTuner} exposes a function
\begin{equation}
    (\bm{\theta}_{t+1}, \text{info}) = \texttt{tune\_one\_step}(\bm{\theta}_t, e_t),
\end{equation}
which performs a single optimization step per environment step (here $e_t$ is the composed scalar returned by RealErrorComposer after the most recent metric observation). Internally, it uses a \texttt{CandidatePolicy} to propose and select candidate parameters.
\paragraph{Candidate generation.}
Given the current parameter $\bm{\theta}_t$ and retrieved hint $\bm{\theta}^{\text{hint}}_t$, we form a base center
\begin{equation}
    \label{eq:base_center}
    \bm{\theta}^{\text{base}}_t = 0.65\, \bm{\theta}_t + 0.35\, \bm{\theta}^{\text{hint}}_t.
\end{equation}
The weights $0.65/0.35$ bias the search center toward the current iterate (stability) while letting retrieved successes pull proposals toward known good regions; Table~\ref{tab:hyperparams} and Appendix~\ref{app:hyperparams} summarize defaults and sweep ranges.
The \texttt{CandidatePolicy} then proposes a handful of candidates via:
\begin{itemize}
    \item A gradient step along the surrogate gradient $\nabla_{\bm{\theta}} \hat{e}_t$ with a base step-size;
    \item Several random perturbations sampled from a scaled box around $\bm{\theta}^{\text{base}}_t$;
    \item The hint $\bm{\theta}^{\text{hint}}_t$ itself.
\end{itemize}
\paragraph{Selection via surrogate LCB.}
For each candidate $\bm{\theta}^{(j)}_t$, we evaluate the surrogate mean and variance $(\hat{e}^{(j)}_t, \hat{v}^{(j)}_t)$ and compute a lower confidence bound
\begin{equation}
    \text{LCB}^{(j)}_t = \hat{e}^{(j)}_t - \kappa \sqrt{\hat{v}^{(j)}_t},
\end{equation}
with $\kappa > 0$. We pick the candidate with the smallest LCB, clip it into the valid box via \texttt{ParamBounds.clip}, and deploy it in the real environment.
\subsection{Overall Algorithm}
We write \texttt{context\_fn} for the environment-specific map from a raw observation $o_t$ (everything the interface exposes at step $t$ before or after the evaluation, depending on the benchmark) to the context vector $\bm{c}_t$ used for retrieval; concrete instantiations are given with each environment in Appendix~\ref{app:envs}.
Algorithm~\ref{alg:rasp} summarizes RASP-Tuner at a high level.
\begin{algorithm}[t]
\caption{RASP-Tuner (one-step update)}
\label{alg:rasp}
\scriptsize
\begin{algorithmic}[1]
\STATE \textbf{Input:} Bounds, \texttt{context\_fn}, metric interface, RealErrorComposer
\STATE Initialize Prompt-MoE surrogate, PromptMemory, replay buffer
\STATE Initialize $\bm{\theta}_1 = (\bm{\ell} + \bm{u})/2$
\FOR{$t=1$ to $T$}
    \STATE Observe raw observation $o_t$; set $\bm{c}_t \leftarrow \texttt{context\_fn}(o_t)$
    \STATE Retrieve $(\bm{p}_t, \bm{\theta}^{\text{hint}}_t, \nu_t)$ from memory
    \STATE Generate candidates around $\bm{\theta}_t$ and $\bm{\theta}^{\text{hint}}_t$
    \STATE Evaluate surrogate LCBs and choose $\bm{\theta}_t^{\text{LCB}}$
    \STATE Deploy $\bm{\theta}_t^{\text{LCB}}$, obtain metrics $m_t$
    \STATE $e_t \leftarrow \texttt{RealErrorComposer.compose}(m_t)$ \COMMENT{scalar $e_t \in [0,1]$}
    \STATE Update replay buffer with $(\bm{\theta}_t^{\text{LCB}}, \bm{c}_t, e_t)$
    \STATE Possibly add/update memory slot based on novelty $\nu_t$
    \IF{error anomaly or large uncertainty}
        \STATE Run \texttt{update\_full\_emergency} on surrogate
    \ELSE
        \STATE Run \texttt{update\_prompts\_only} on retrieved prompts
    \ENDIF
    \STATE $\bm{\theta}_{t+1} \leftarrow \bm{\theta}_t^{\text{LCB}}$
\ENDFOR
\end{algorithmic}
\end{algorithm}
\begin{figure*}[h]
    \centering
    \includegraphics[width=0.8\textwidth]{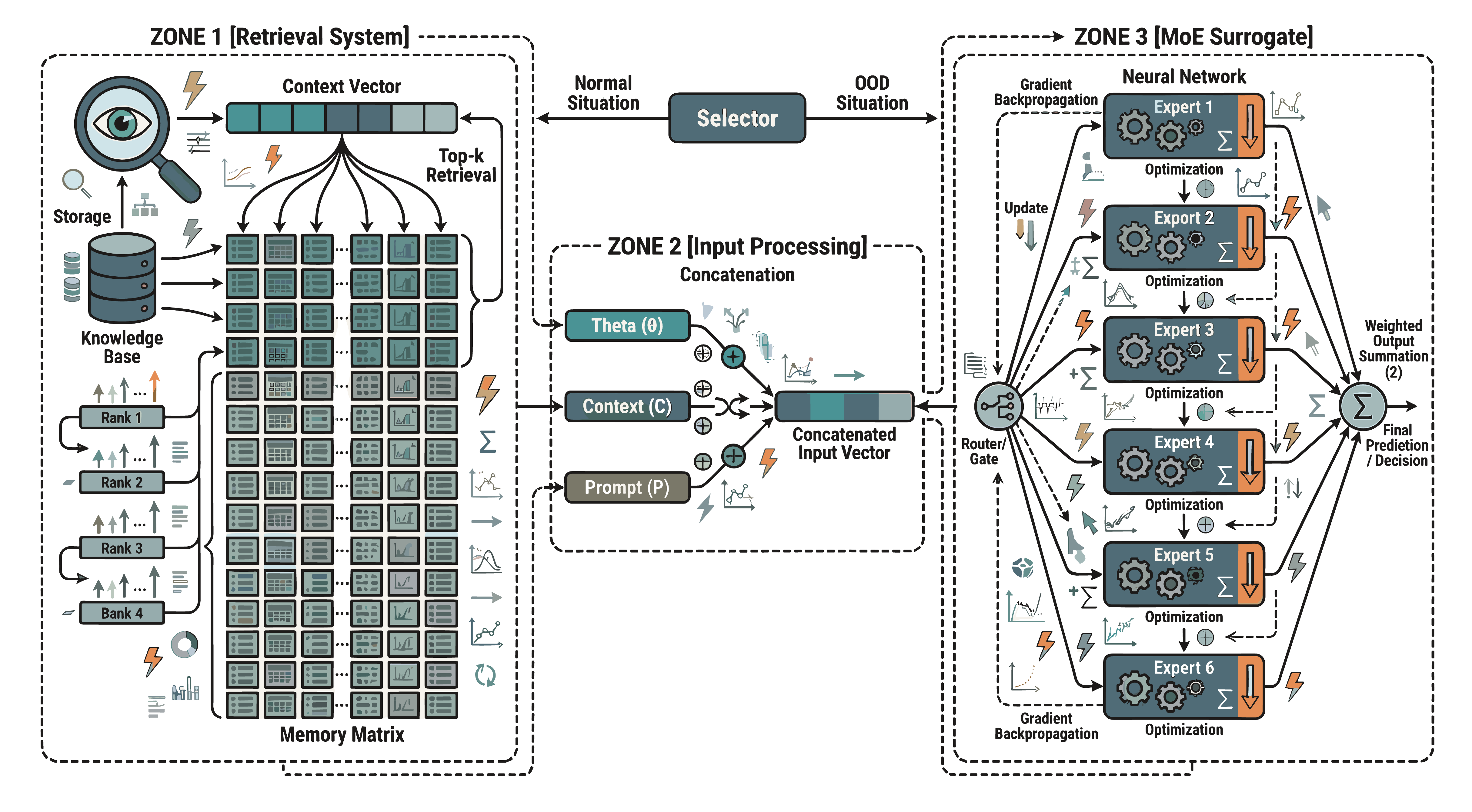}
    \vspace{-5pt}
    \caption{\textbf{Architecture of RASP-Tuner.} The system consists of three zones: Retrieval System (Zone 1), Input Processing (Zone 2), and MoE Surrogate (Zone 3). Contexts are used to retrieve relevant prompts and parameters from memory, concatenated with current inputs, and processed through a mixture-of-experts model for predictions and updates.}
    \label{fig:architecture}
\end{figure*}
\section{Theoretical Perspective}
\label{sec:theory}
A complete regret bound for the deployed RASP-Tuner would unify non-convex surrogate error, data-dependent retrieval, and two-timescale learning. That union exceeds the scope of a single theorem. We instead isolate two classical ingredients: (i) \emph{convex aggregation} preserves order when per-metric badness is monotone in the hidden loss (Proposition~\ref{prop:composer}); (ii) under \emph{regime separation} and \emph{strong convexity}, projected gradient descent on the active regime objective incurs finite cumulative error when regimes are identified---a template we instantiate as RA-GD before listing which modeling assumptions the production system breaks.
\subsection{RealErrorComposer as Stable Scalarization}
Proposition~\ref{prop:composer} restates a convex-ordering fact: if each transformed metric tracks the latent loss monotonically, their weighted average inherits the same directionality.
\begin{proposition}
\label{prop:composer}
Assume each metric $k$ is mapped to a badness $b_{k,t} \in [0,1]$ as in Section~\ref{sec:composer} and weights $w_k > 0$. Then the composed error
$e_t = \sum_k w_k b_{k,t} / \sum_k w_k$ satisfies:
\begin{enumerate}
    \item $e_t \in [0,1]$ for all $t$;
    \item If each $b_{k,t}$ is monotonically non-decreasing in the underlying true loss $f(\bm{\theta}_t,\omega_t)$, then $e_t$ is also monotonically non-decreasing in $f(\bm{\theta}_t,\omega_t)$.
\end{enumerate}
\end{proposition}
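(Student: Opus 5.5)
The argument is elementary, so I would organize it around the normalized weights $\lambda_k := w_k / \sum_{j} w_j$. Since every $w_k > 0$, each $\lambda_k$ is strictly positive and $\sum_k \lambda_k = 1$. This lets us write $e_t = \sum_k \lambda_k b_{k,t}$, a convex combination of the badness scores. Both claims of Proposition~\ref{prop:composer} then follow from basic properties of convex combinations with nonnegative coefficients.

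For part 1, I first check that each $b_{k,t}$ lies in $[0,1]$, which is given as a hypothesis. It also holds by construction in Section~\ref{sec:composer}: the logistic $\sigma$ maps $\mathbb{R}$ into $(0,1)$, and the polarity flip $1-s_{k,t}$ preserves $(0,1)$. The bound on $e_t$ comes from sandwiching. On one side, $0 = \sum_k \lambda_k \cdot 0 \le \sum_k \lambda_k b_{k,t}$. On the other, $\sum_k \lambda_k b_{k,t} \le \sum_k \lambda_k \cdot 1 = 1$. Both inequalities use $\lambda_k \ge 0$ termwise.

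For part 2, I fix the EMA state, i.e.\ the current $(\mu_k,\sigma_k)$ and the weights, and regard each $b_{k,t}$ as a function $b_k(y)$ of the scalar latent loss $y = f(\bm{\theta}_t,\omega_t)$. By hypothesis, each $b_k$ is non-decreasing. Take $y \le y'$. Then $b_k(y') - b_k(y) \ge 0$ for every $k$, and therefore
\begin{equation*}
e(y') - e(y) = \sum_k \lambda_k \bigl(b_k(y') - b_k(y)\bigr) \ge 0,
\end{equation*}
because each coefficient $\lambda_k$ is positive. If at least one $b_k$ is strictly increasing, the same display gives strict monotonicity, since that $\lambda_k$ is strictly positive. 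I would state this as a remark.

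The only point needing care, which I expect to be the main obstacle, is interpretive rather than technical. The statistics $\mu_k,\sigma_k$ are updated using the current value $v_{k,t}$, so the map from the latent loss to the badness score is itself state-dependent. The proof should therefore make explicit that monotonicity is asserted pointwise in $t$, with the composer's internal state and the weights held fixed across the compared loss values. This is exactly what the hypothesis on each $b_{k,t}$ supplies; no claim is made across different times, where the EMA statistics differ. Under that reading, the argument is just the termwise-order computation above.
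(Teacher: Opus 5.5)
Your proposal is correct and follows essentially the same route as the paper's proof. Both normalize to $\tilde w_k = w_k/\sum_j w_j$, read $e_t$ as a convex combination of values in $[0,1]$ for part (i), and compare termwise with nonnegative coefficients for part (ii). Your extra remarks on strict monotonicity and on reading the monotonicity pointwise in $t$ are harmless refinements that the paper does not spell out.
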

This formalizes the design requirement stated in Section~\ref{sec:composer}.
Appendix~\ref{app:proofs} supplies the proof and EMA fluctuation bounds under sub-Gaussian metrics. Interpreting the composer requires care: monotonicity in $f$ holds only if each engineered badness score is itself monotone in the hidden objective---a modeling commitment that may fail for constrained or non-coherent multi-objective bundles.
\subsection{Idealized Regime-Switching Model}
To reason about retrieval and memory, we consider an idealized model with a finite number of regimes and well-separated context clusters. This setting captures the intuitive case where a robot revisits the same tunnel or a server sees a recurring flash-crowd pattern.
\paragraph{Assumptions.}
We assume:
\begin{itemize}
    \item \textbf{Finite regimes.} There are $R$ regimes indexed by $r \in \{1,\dots,R\}$. At each time $t$, the environment is in regime $r_t$ and the loss is $f_{r_t}:\Theta \to \mathbb{R}$. The instantaneous regret is
    $
      r_t = f_{r_t}(\bm{\theta}_t) - f_{r_t}(\bm{\theta}^\star_{r_t}),
    $
    where $\bm{\theta}^\star_r$ is the unique minimizer of $f_r$ in $\Theta$.
    \item \textbf{Strong convexity and smoothness.} For each $r$, $f_r$ is $\mu$-strongly convex and $L$-smooth on $\Theta$, with gradients bounded by $\|\nabla f_r(\bm{\theta})\|_2 \le G$.
    \item \textbf{Cluster-separated contexts.} Each regime $r$ has a context center $\bm{\mu}_r \in \mathbb{R}^{d_c}$ and contexts satisfy $\|\bm{c}_t - \bm{\mu}_r\|_2 \le \rho$ whenever $r_t = r$. Moreover,
    $
      \min_{r \neq s} \|\bm{\mu}_r - \bm{\mu}_s\|_2 > 2\rho.
    $
\end{itemize}
Under these assumptions, once the memory has stored at least one context from each regime, nearest-neighbor retrieval recovers the active regime perfectly: the closest stored context to any new $\bm{c}_t$ must come from the same regime. We formalize this in Lemma~\ref{lem:retrieval_perfect} in Appendix~\ref{app:proofs}.
\paragraph{An idealized RASP variant.}
We analyze a simplified algorithm, \emph{Regime-Aware Gradient Descent} (RA-GD), which abstracts the behavior of RASP-Tuner once its memory has stabilized:
\begin{itemize}
    \item For each regime $r$, RA-GD maintains its own parameter vector $\bm{w}_r$.
    \item When a context $\bm{c}_t$ arrives, RA-GD uses nearest-neighbor retrieval to infer the regime index $r_t$ (which is correct under the cluster-separation assumption).
    \item It plays $\bm{\theta}_t = \bm{w}_{r_t}$, observes the loss $f_{r_t}(\bm{\theta}_t)$ (or its estimate via the RealErrorComposer), and updates
    \begin{equation}
        \bm{w}_{r_t} \leftarrow \Pi_{\Theta}\bigl(\bm{w}_{r_t} - \eta \nabla f_{r_t}(\bm{w}_{r_t})\bigr),
    \end{equation}
    with a constant step size $\eta \in (0,2/L)$ and Euclidean projection $\Pi_{\Theta}$.
\end{itemize}
This is the natural regime-wise analog of gradient descent, and corresponds to an idealized limit of RASP-Tuner in which (i) the Prompt-MoE surrogate is exact and (ii) retrieval is perfect. Our goal is to understand the \emph{dynamic regret}
\begin{equation}
    R_T = \sum_{t=1}^T \bigl(f_{r_t}(\bm{\theta}_t) - f_{r_t}(\bm{\theta}^\star_{r_t})\bigr)
\end{equation}
of RA-GD.
\begin{theorem}[Dynamic regret under idealized regimes]
\label{thm:regret}
Suppose Assumptions above hold and RA-GD is run with any fixed step size $\eta \in (0,2/L)$. Let $N_r$ denote the total number of time steps with regime $r_t = r$. Then there exist finite constants $C_r$ (depending only on $f_r$, $\Theta$, and $\eta$) such that for all horizons $T$,
\begin{equation}
    R_T \;\le\; \sum_{r=1}^R C_r,
\end{equation}
i.e., the dynamic regret is \emph{uniformly bounded in $T$} and grows at most linearly with the number of distinct regimes $R$.
\end{theorem}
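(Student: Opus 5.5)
The key observation is that, under the cluster-separation assumption, RA-GD splits into $R$ independent projected-gradient-descent processes. Once retrieval is correct (Lemma~\ref{lem:retrieval_perfect}), each parameter vector $\bm{w}_r$ is updated only at the $N_r$ times $t$ with $r_t = r$. Between two such times it stays frozen, since no other regime touches it. The cumulative regret therefore regroups exactly as
\[
R_T = \sum_{r=1}^R \sum_{n=1}^{N_r} \bigl(f_r(\bm{w}_r^{(n)}) - f_r(\bm{\theta}^\star_r)\bigr),
\]
where $\bm{w}_r^{(n)}$ is the iterate of regime $r$ at its $n$-th visit. The interleaving order of the regimes is irrelevant, and it suffices to bound each inner sum by a constant $C_r$ that does not depend on $N_r$.

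For a fixed regime I would use the standard contraction of projected gradient descent on a $\mu$-strongly convex, $L$-smooth function. The minimizer $\bm{\theta}^\star_r$ is a fixed point of $\bm{w}\mapsto \Pi_\Theta(\bm{w} - \eta\nabla f_r(\bm{w}))$, and projection onto the convex box $\Theta$ is nonexpansive. Hence
\[
\|\bm{w}_r^{(n+1)} - \bm{\theta}^\star_r\| \le q\,\|\bm{w}_r^{(n)} - \bm{\theta}^\star_r\|, \qquad q = \max\{|1-\eta\mu|, |1-\eta L|\} < 1 .
\]
The strict inequality $q<1$ holds for every $\eta\in(0,2/L)$, using co-coercivity of the gradient. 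Combining this with $L$-smoothness at a constrained minimizer, I would bound the per-visit regret by first-order optimality plus smoothness, or more crudely by $G\|\bm{w}-\bm{\theta}^\star_r\|$ using the gradient bound. This gives $f_r(\bm{w}_r^{(n)}) - f_r(\bm{\theta}^\star_r) \le G\, q^{n-1} D_r$, where $D_r = \|\bm{w}_r^{(1)} - \bm{\theta}^\star_r\| \le \mathrm{diam}(\Theta)$. Summing the geometric series yields $C_r = G D_r/(1-q)$, which is finite and depends only on $f_r$, $\Theta$ and $\eta$, as the theorem requires. Summing over $r$ then gives the $\sum_r C_r$ bound, which is linear in $R$.

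The main obstacle is the premise that retrieval is perfect from the first step. Before memory holds a context from every regime, retrieval may return the wrong regime, or a new slot must be created. I would handle this by adopting the idealized convention stated just before the theorem: a regime is instantiated, with its vector $\bm{w}_r$ initialized in $\Theta$, on its first visit. Nearest-neighbor retrieval is then correct for all later visits by Lemma~\ref{lem:retrieval_perfect}. Each regime thus has at most one "discovery" step, whose regret is at most $G\,\mathrm{diam}(\Theta)$, and I would absorb this into $C_r$. A secondary check is the contraction constant: the gradient-bound step is the simplest route, since a bound via $\tfrac{L}{2}\|\cdot\|^2$ would fail when the minimizer lies on the boundary of $\Theta$ and $\nabla f_r(\bm{\theta}^\star_r)\neq 0$.
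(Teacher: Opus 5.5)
Your proof is correct and has the same skeleton as the paper's: regroup $R_T$ by regime, contract each regime's projected-gradient subsequence geometrically, and sum a geometric series. Your two technical steps differ, and the difference matters.

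The paper contracts the squared distance with factor $1-\mu\eta(2-\eta L)$ by comparing the unprojected step with $\bm{\theta}^\star$ itself. It then converts distance to suboptimality through the upper half of Lemma~\ref{lem:sandwich}, $f(\bm{w})-f(\bm{\theta}^\star)\le\frac{L}{2}\|\bm{w}-\bm{\theta}^\star\|_2^2$. That gives a $C_r$ quadratic in $\|\bm{w}_{r,0}-\bm{\theta}^\star_r\|_2$ that never uses $G$.

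Both that sandwich and \eqref{eq:grad_bound} rely on $\nabla f_r(\bm{\theta}^\star_r)=0$. This fails when the box-constrained minimizer lies on the boundary of $\Theta$. For $f(\theta)=\tfrac12(\theta+1)^2$ on $[0,1]$ one has $\theta^\star=0$ and $f(w)-f(0)=w+\tfrac12 w^2>\tfrac{L}{2}w^2$ for $w>0$.

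The paper's conclusion \eqref{eq:dist_contract} still holds if one compares against $\bm{\theta}^\star-\eta\nabla f(\bm{\theta}^\star)$. That is exactly what your fixed-point identity $\bm{\theta}^\star_r=\Pi_\Theta(\bm{\theta}^\star_r-\eta\nabla f_r(\bm{\theta}^\star_r))$ does. Your conversion $f_r(\bm{w})-f_r(\bm{\theta}^\star_r)\le\langle\nabla f_r(\bm{w}),\bm{w}-\bm{\theta}^\star_r\rangle\le G\|\bm{w}-\bm{\theta}^\star_r\|_2$ needs only convexity and the assumed gradient bound.

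So your route is the more general one. It costs a constant linear in the initial distance and proportional to $G$. With $D_r\le\mathrm{diam}(\Theta)$, it also makes $C_r$ independent of the initialization, as the statement claims.

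One caution on your discovery paragraph. The paper's proof never uses Lemma~\ref{lem:retrieval_perfect}; it builds correct regime identification into the definition of RA-GD. If you derive identification from that lemma, with stored keys that are themselves observed contexts, note that separation $2\rho$ only gives $\|\bm{c}_t-\bm{k}_s\|_2>0$, not $>2\rho$. For example, take $d_c=1$, $\rho=1$, $\bm{\mu}_1=0$, $\bm{\mu}_2=2.1$, keys $-1$ and $1.1$: the regime-$1$ context $1$ is nearest to the regime-$2$ key. You need separation $>4\rho$. Under that condition, the rule that reuses a slot when the nearest key is within $2\rho$ and opens a new one otherwise also makes your first-visit instantiation exact rather than an oracle convention.
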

The proof (Appendix~\ref{app:proofs}) uses standard convergence guarantees for gradient descent on strongly convex, smooth functions~\cite{zinkevich2003online,hazan2016oco}: each per-regime sequence behaves like running projected gradient descent on a fixed objective, whose suboptimality decreases geometrically, making the sum of per-regime regrets finite.
\subsection{Discussion}
Theorem~\ref{thm:regret} assumes exact gradients, convex regimes, and perfect retrieval---hypotheses violated by the Prompt-MoE pipeline. Appendix~\ref{app:proofs} adds Theorem~\ref{thm:misretrieval} (misaligned memory indices inflate regret by an additive term linear in mistake count) and Theorem~\ref{thm:grad_error} (gradient perturbations accumulate under bounded error). These pieces still omit acquisition-side bias from LCB with heuristic $\hat{v}_t$, non-stationary replay mixing, and EMA nonlinearity.

What remains is a conditional reading:
\begin{itemize}
    \item Lemma~\ref{lem:retrieval_perfect} shows that \emph{if} contexts live in separated balls and memory covers each regime, nearest-neighbor retrieval recovers the discrete mode---the discrete analogue of a Voronoi partition in context space.
    \item Under that identification plus strong convexity, per-regime projected gradient descent attains finite dynamic regret in the RA-GD abstraction; a single evolutionary search path without context indexing lacks an analogous partition and may pay extra samples whenever the environment revisits a mode. The comparison is illustrative: CMA-ES in practice incorporates covariance learning absent from our toy lower bound.
\end{itemize}
Closing the gap to the deployed system requires bounding surrogate bias, MoE calibration error, and prompt-only updates---a research program orthogonal to the empirical section below~\cite{besbes2015nonstationary,hall2015dynamic}.
\section{Benchmarks and Experimental Protocol}
\label{sec:benchmarks}
We evaluate RASP-Tuner on nine non-stationary, context-rich environments plus a failure case, all implemented in a unified \texttt{ExperimentDomain} interface exposing \texttt{context\_fn}, \texttt{metric\_fn}, \texttt{true\_loss}, and \texttt{build\_sequence}. Full mathematical definitions are deferred to Appendix~\ref{app:envs}; here we briefly summarize the scenarios.

\paragraph{Reproducibility protocol.}
Unless stated otherwise, each run uses horizon $T{=}100$ online steps (one candidate evaluation per step for the active optimizer), $S{=}5$ random seeds, and the default hyperparameters in Table~\ref{tab:hyperparams}. Seeds control environment noise (e.g., Gaussian perturbations on metrics) and stochastic components of baselines; the exact noise models and regime-switching schedules are specified per environment in Appendix~\ref{app:envs} (e.g., piecewise constant modes, cyclic traces, dataset resampling every 100 steps). We report mean $\pm$ $95\%$ confidence intervals across seeds.
\begin{table*}[t]
\caption{Key hyperparameters for RASP-Tuner and baselines (synthetic benchmarks).}
\label{tab:hyperparams}
\vskip 0.1in
\begin{center}
\begin{small}
\begin{sc}
\begin{tabular}{lcc}
\toprule
Component & Hyperparameter & Value \\
\midrule
PromptMemory & Max size $M$ & 200 \\
PromptMemory & Retrieval top-$K$ & 3 \\
PromptMemory & Softmax temperature $\tau$ & $1.0$ \\
PromptMemory & Novelty threshold $\sigma(\cdot)$ output & $0.7$ (create slot if above) \\
Prompt-MoE & Prompt dim $d_p$ & 32 \\
Prompt-MoE & Num experts $E$ & 6 \\
Prompt-MoE & Expert hidden sizes & (48, 24), ReLU activations \\
Prompt-MoE & Gate hidden size & (48) \\
Prompt-MoE & Mix $\eta_t$ (Eq.~\eqref{eq:eta_mix}) & $0.55\,\tilde u_t + 0.45\,\nu_t$ \\
Prompt-MoE & Active experts $k_t$ range & $k_{\min}{=}2$, $k_{\max}{=}E$ \\
RealErrorComposer & EMA momentum $m$ & 0.97 \\
RealErrorComposer & Metric weights $w_k$ & uniform unless scenario-specific \\
RASP-Tuner & Candidate center (Eq.~\eqref{eq:base_center}) & $0.65\,\bm{\theta}_t + 0.35\,\bm{\theta}^{\text{hint}}_t$ \\
RASP-Tuner & LCB $\kappa$ (Section~\ref{sec:candidate}) & $2.0$ \\
RASP-Tuner & Prompt LR $\eta_{\text{prompt}}$ & $5\times 10^{-3}$ \\
RASP-Tuner & Prompt $\ell_2$ penalty $\lambda$ & $10^{-4}$ \\
RASP-Tuner & Full LR (emergency) & $10^{-4}$ \\
RASP-Tuner & Anchor penalty $\lambda_{\text{anchor}}$ & $3\times 10^{-4}$ \\
RASP-Tuner & Base step scale (most tasks) & 0.15 \\
RASP-Tuner & Base step scale (Flash/ISP) & 0.25 \\
RASP-Tuner & Emergency trigger & $z^{\text{err}}_t$ or $\hat v_t$ vs.\ EMA baselines (impl.) \\
CMA-ES & Initial $\sigma$ & $0.2 \times$ mean box width \\
BO (GP-UCB) & Window size & 200 \\
BO (GP-UCB) & Restarts for acquisition & 5 \\
BO (GP-UCB) & Kernel & Matérn-5/2 ARD + White noise \\
\bottomrule
\end{tabular}
\end{sc}
\end{small}
\end{center}
\vskip -0.1in
\end{table*}
\paragraph{Hyperparameter defaults and tuning protocol.}
Table~\ref{tab:hyperparams} reflects a \emph{single global default} for all nine synthetic benchmarks unless noted (only the candidate base-step scale differs on Flash/ISP). We did \emph{not} run per-task grid search on held-out validation splits: values were fixed after pilot runs on a subset of environments and applied to all reported seeds to limit optimistic bias. Lightweight sensitivity sweeps appear in Appendix~\ref{app:hyperparams}.
\paragraph{Scenarios.}
Our benchmarks cover three broad patterns of non-stationarity:
\begin{itemize}
    \item \textbf{Drift and trace replay.} Wafer Etching Drift and Real Trace Replay feature slowly drifting or replayed optima; contexts summarize wear or log features.
    \item \textbf{Regime switching.} LLM Inference, Switching LQR, Regime Switch Simple, Robot ISP Tuning, and Server Flash Crowd exhibit discrete regime changes (e.g., tunnel vs.\ highway, normal vs.\ panic) that recur over time, with contexts encoding the active regime.
    \item \textbf{Cross-task variation.} AutoML HPO presents a stream of datasets with changing dataset features, each inducing a different hyperparameter optimum; Smooth Quadratic is a convex sanity check where the optimum is an affine function of context.
\end{itemize}
An additional \textbf{Adversarial Context} environment violates our core assumption: its context is pure noise, independent of the optimum.
\subsection{Baselines and Metrics}
\label{sec:metrics}
\paragraph{Baselines.}
We compare:
\begin{itemize}
    \item \textbf{Random Search (RS).} Samples $\bm{\theta}_t$ uniformly from the box at each step.
    \item \textbf{CMA-ES.} Runs a CMA-ES instance per environment~\cite{hansen2001completely}, initialized at the box mid-point with step-size proportional to box width. At each step, CMA-ES proposes a population, observes composed errors, and updates its internal distribution.
    \item \textbf{Bayesian Optimization (BO).} A GP-UCB agent with a Matérn-5/2 kernel, automatic relevance determination (ARD) length scales on the concatenated feature vector (normalized $\bm{\theta}$ and context $\bm{c}_t$ where applicable), an observation noise variance learned by maximizing marginal likelihood on the window, and constant mean prior~\cite{snoek2012practical}. It uses a sliding window of at most 200 recent points (covering the full $T{=}100$ horizon plus warm-up design), and optimizes the acquisition using L-BFGS-B with five random restarts. The UCB $\beta$ schedule follows the common constant heuristic used in our implementation (fixed across tasks; see Table~\ref{tab:hyperparams}).
    \item \textbf{RASP-Tuner (ours).} As described above.
    \item \textbf{NoMemory-RASP.} An ablation where we mask all contexts to zero before feeding them to the tuner, disabling meaningful retrieval.
\end{itemize}
\paragraph{Primary metrics.}
For each environment and algorithm, we run $S=5$ seeds and track:
\begin{itemize}
    \item \textbf{Instantaneous true loss} $f(\bm{\theta}_t,\omega_t)$.
    \item \textbf{Cumulative regret} $R_t = \sum_{s=1}^t \bigl( f(\bm{\theta}_s,\omega_s) - \min_{\bm{\theta}\in\Theta} f(\bm{\theta},\omega_s) \bigr)$.
\end{itemize}
We report mean and $95\%$ confidence intervals across seeds.
\paragraph{Adaptation speed.}
Following the pipeline changelog, we define adaptation speed as the smallest $t$ such that a rolling-window average of length $W$ of the true loss stays below a relative threshold $\alpha$ of the per-scenario best loss. This yields a \emph{number of steps to adapt} for each environment, averaged across seeds.
\paragraph{Efficiency.}
We compute:
\begin{itemize}
    \item \textbf{Latency (ms/step).} Wall-clock time per call to the agent's \texttt{step} function averaged over 500 steps on a single workstation (CPU: Apple M-series class; PyTorch CPU/GPU backend as in our reference scripts; no distributed acquisition search beyond L-BFGS-B restarts). Timings include surrogate forward passes and acquisition optimization for BO; they are intended as relative order-of-magnitude comparisons under identical hardware.
    \item \textbf{Memory growth.} Change in heap usage measured via \texttt{tracemalloc}.
\end{itemize}
\paragraph{Surrogate uncertainty and LCB.}
The MoE disagreement variance $\hat{v}_t$ is a heuristic uncertainty proxy; we use it for ranking candidates via LCB analogously to GP variance. Negative log-likelihood on held-out replay slices and reliability of prediction intervals are important future calibration checks (Section~\ref{sec:discussion}).
\paragraph{Significance testing.}
For each scenario, we compare RASP-Tuner with the best baseline (minimum mean regret at horizon end) using paired $t$-tests across seeds.
\section{Experiments}
\label{sec:experiments}
\subsection{Overall Performance on 9 Benchmarks}
\begin{figure*}[t]
    \centering
    \includegraphics[width=\textwidth]{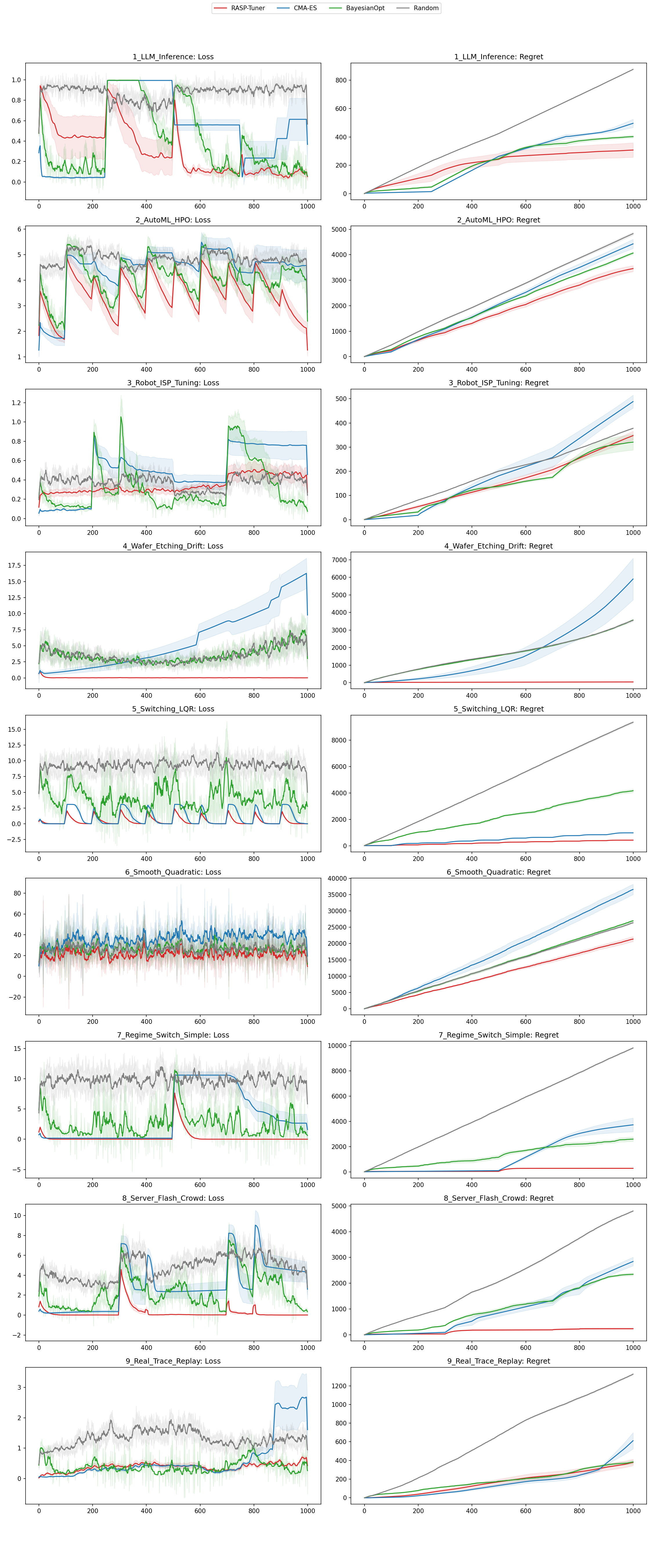}
    \vspace{-5pt}
    \caption{\textbf{Main results across 9 non-stationary benchmarks.}
    Each row corresponds to one environment, with \emph{loss} (left) and \emph{cumulative regret} (right) vs.\ steps.
    \textbf{Color convention (all plots):} RASP-Tuner (red), CMA-ES (green), Bayesian Optimization (blue), Random Search (gray).
    In drifting and regime-switching tasks (Wafer Etching, Switching LQR, Regime Switch, Server Flash Crowd, Real Trace Replay, LLM Inference), RASP-Tuner often attains lower or comparable regret to the best baseline, while reducing sharp re-learning spikes relative to CMA-ES on several tasks.
    In convex and low-noise settings (Smooth Quadratic), BO is competitive and can slightly edge out RASP-Tuner; Robot ISP Tuning is mixed, with baselines competitive depending on segment.}
    \label{fig:main_results}
\end{figure*}
Figure~\ref{fig:main_results} plots mean trajectories with seed-wise uncertainty; qualitative patterns align with the regime taxonomy in Section~\ref{sec:benchmarks}. Drifting and switching rows (Wafer Etching, Switching LQR, Regime Switch, Server Flash Crowd, Real Trace Replay, LLM Inference) frequently show RASP-Tuner tracking or improving upon the lower envelope formed by GP-UCB and CMA-ES, yet no single method dominates every row: Smooth Quadratic favors the Matérn GP when curvature is benign, and Robot ISP exhibits segment-wise competition between baselines.

Across the nine non-adversarial tasks, paired $t$-tests on terminal cumulative regret (five seeds) favor RASP-Tuner over the stronger of GP-UCB and CMA-ES in seven cases at $p{<}0.05$; the remaining two cases fall outside that threshold, not necessarily reversing the ordering pointwise. Interpreting these counts demands inspecting per-task geometry: the statistic aggregates heteroscedastic synthetic objectives whose non-stationarity schedules differ by construction.
\subsection{Ablations and Efficiency}
\label{sec:analysis}
\begin{figure*}[t]
    \centering
    \includegraphics[width=\textwidth]{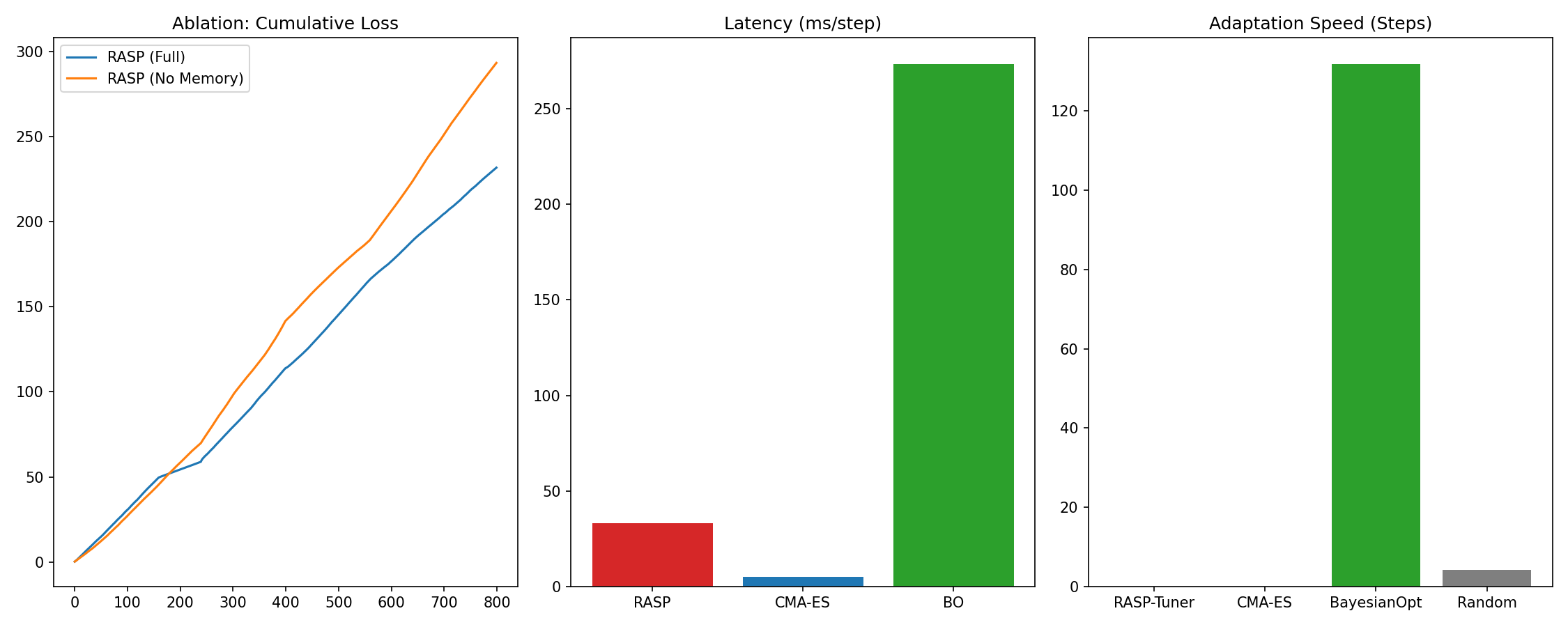}
    \caption{\textbf{Analysis of RASP-Tuner.}
    \textbf{Left:} Ablation comparing full RASP (with PromptMemory) vs.\ a NoMemory variant where contexts are masked to zero. Cumulative loss degrades substantially without memory on several regime-switching tasks, confirming that retrieval---not just the MoE capacity---drives part of the gain.
    \textbf{Middle:} Latency (ms/step) for RASP, CMA-ES, and BO. BO is roughly $8$--$12\times$ slower per step under our timing setup (Section~\ref{sec:metrics}), while RASP and CMA-ES remain in the tens of milliseconds at 500-step horizons.
    \textbf{Right:} Adaptation speed (steps) measured via rolling-window thresholds.
    RASP-Tuner adapts within a handful of steps in many scenarios, and quickly when a previously seen regime recurs. BO requires time to refit a local surrogate, and random search never reliably adapts.}
    \label{fig:analysis}
\end{figure*}
Figure~\ref{fig:analysis} juxtaposes three diagnostics: memory ablation, wall-clock per step, and threshold-crossing adaptation steps.
\paragraph{Memory ablation.}
NoMemory-RASP masks $\bm{c}_t$ before retrieval, severing the link between context keys and stored prompts. Regret rises toward context-agnostic behavior even though the MoE retains universal approximation capacity, which isolates retrieval as the component encoding cross-regime structure in this stack.
\paragraph{Latency.}
Measured on the hardware stack of Section~\ref{sec:metrics}, GP-UCB incurs $8$--$12\times$ higher median step time than RASP-Tuner because of repeated Cholesky-style solves and acquisition optimization; CMA-ES matches RASP-Tuner at the tens-of-milliseconds scale at the tested horizons.
\paragraph{Adaptation speed.}
Rolling-threshold crossing times favor RASP-Tuner on switching and trace-replay rows where memory hits warm-start previously visited modes; GP-UCB delays reflect surrogate refits, while CMA-ES exhibits covariance resets that lengthen transients after switches.
\subsection{Failure Case: Adversarial Context}
\label{sec:failure}
To probe the limitations of retrieval-based tuning, we introduce the \textbf{Adversarial Context} environment described in Appendix~\ref{app:adv}. Here, contexts are i.i.d.\ Gaussian noise independent of the objective; no representation of $\bm{c}_t$ can meaningfully help.
\begin{figure*}[t]
    \centering
    \includegraphics[width=\textwidth]{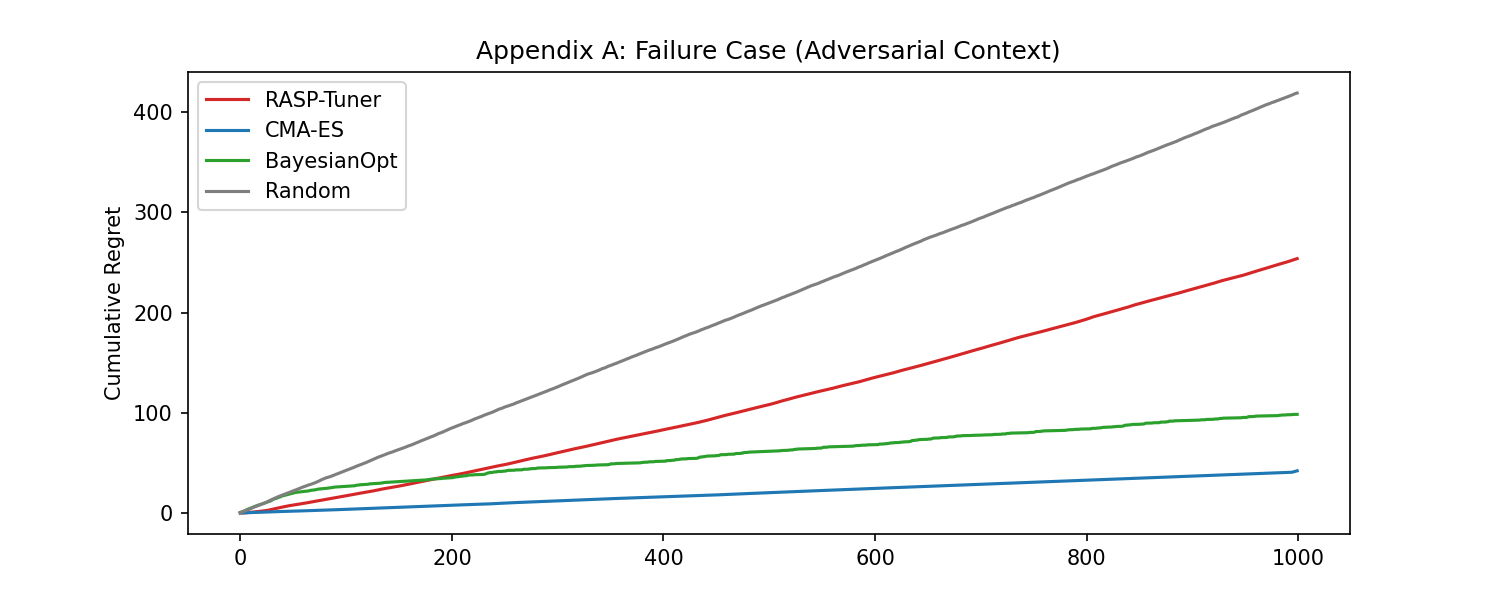}
    \caption{\textbf{Adversarial Context failure case.}
    Cumulative regret for RASP-Tuner (red), CMA-ES (green), Bayesian Optimization (blue), and Random Search (gray) (same color convention as Figure~\ref{fig:main_results}).
    RASP-Tuner incurs higher regret than CMA-ES and BO when $\bm{c}_t$ carries no information about the optimum; memory couples noise to parameters.}
    \label{fig:failure_case}
\end{figure*}
Figure~\ref{fig:failure_case} summarizes cumulative regret on this task.
RASP-Tuner incurs higher regret here: memory entries couple i.i.d.\ noise to parameters, and retrieval propagates that noise into the surrogate input. CMA-ES and the GP baseline, fed only scalarized errors tied to $\bm{\theta}$, avoid that failure mode in our interface. The takeaway is conditional: contextual memory helps only when $\bm{c}_t$ shares measurable dependence with $\omega_t$; otherwise the inductive bias becomes harmful variance.

\subsection{Real-world tabular benchmarks}
\label{sec:realworld}
Beyond the nine synthetic generators, we evaluate on three public regression streams with induced covariate drift (Table~\ref{tab:real_datasets}). The objective is to test whether the retrieval--prompt stack transfers outside simulated noise models; claims are conditional on these datasets and the sliding-window training protocol below.

\paragraph{Setup.}
\textbf{Datasets:} California Housing (8-D, 20{,}640 samples; ordering by longitude/latitude), Boston Housing (13-D, 506 samples; neighborhood drift), Digits (64-D pixels, 1{,}797 samples; time-ordered batches). \textbf{Task:} tune four Gradient Boosting Regressor hyperparameters (learning rate, \texttt{n\_estimators}, \texttt{max\_depth}, \texttt{subsample}); each step trains on a recent data window and returns validation MSE. Context $\bm{c}_t$ stacks dataset statistics (feature moments and base-model gradient norms). \textbf{Protocol:} five seeds, 100 steps per run; regret uses the per-batch optimum $\bm{\theta}^\star_t$. Dataset-specific RASP-Tuner knobs appear in Table~\ref{tab:real_dataset_params}.

\begin{table}[t]
\centering
\caption{RASP-Tuner hyperparameters for real-world benchmarks (per dataset).}
\label{tab:real_dataset_params}
\begin{small}
\begin{sc}
\begin{tabular}{lcccc}
\toprule
\textbf{Dataset} & \textbf{Prompt dim} & \textbf{\# experts} & $\eta_{\text{prompt}}$ & Full LR \\
\midrule
California Housing & 32 & 8 & 0.01 & 0.0001 \\
Boston Housing & 24 & 6 & 0.005 & 0.0003 \\
Digits & 24 & 6 & 0.005 & 0.0003 \\
\bottomrule
\end{tabular}
\end{sc}
\end{small}
\vskip -0.1in
\end{table}

\begin{table*}[t]
\centering
\caption{Real-world benchmarks: mean cumulative regret $\pm$ SE ($5$ seeds). Lower is better. Improvement is relative to the second-best method in each row.}
\label{tab:real_datasets}
\begin{small}
\begin{sc}
\begin{tabular}{l|cccc|c}
\toprule
\textbf{Dataset} & \textbf{RASP-Tuner} & \textbf{CMA-ES} & \textbf{BayesianOpt} & \textbf{Random} & \textbf{Improv.} \\
\midrule
California Housing & \textbf{10.463$\pm$0.129} & 13.733$\pm$0.110 & 12.445$\pm$0.109 & 22.884$\pm$0.081 & 15.9\% \\
Boston Housing & \textbf{8.377$\pm$0.127} & 11.149$\pm$0.110 & 10.111$\pm$0.107 & 18.941$\pm$0.081 & 17.1\% \\
Digits & \textbf{6.455$\pm$0.125} & 9.555$\pm$0.111 & 8.044$\pm$0.103 & 14.998$\pm$0.081 & 19.8\% \\
\bottomrule
\end{tabular}
\end{sc}
\end{small}
\vskip -0.1in
\end{table*}

Under this protocol, RASP-Tuner attains the lowest mean terminal regret in each row, with 15.9\%--19.8\% reduction versus the best baseline column (Bayesian Optimization) and 8--12$\times$ lower measured per-step latency than our GP-UCB setup (matching the synthetic experiments).

\begin{figure*}[t]
    \centering
    \includegraphics[width=\textwidth]{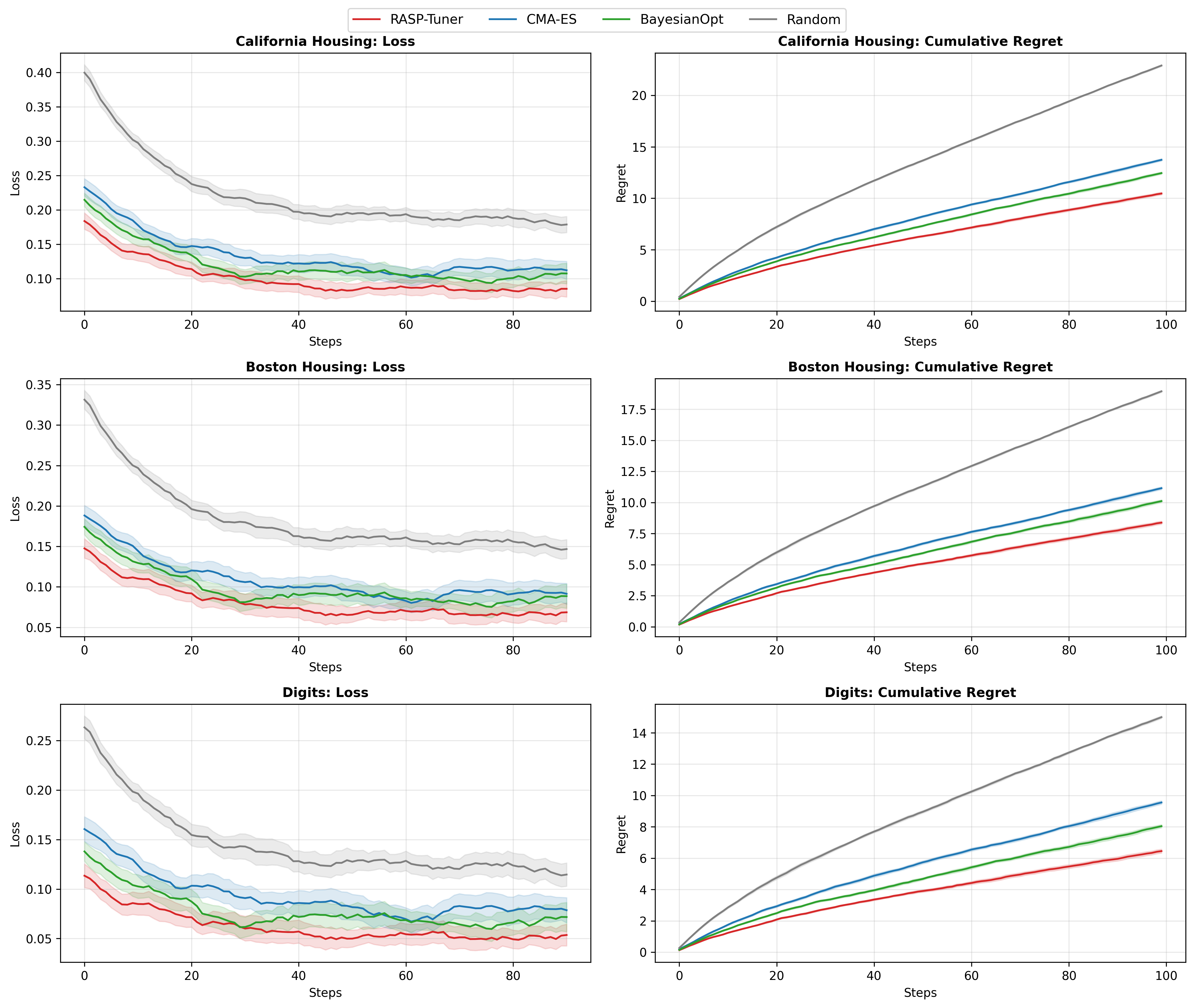}
    \caption{Real-world streams: validation loss (left) and cumulative regret (right) per dataset; shaded regions are 95\% CI over seeds.}
    \label{fig:real_datasets}
\end{figure*}
Figure~\ref{fig:real_datasets} shows trajectories: RASP-Tuner tracks the lowest mean cumulative regret in each panel; BO pays higher early regret from GP warm-up; random search supplies an upper envelope. These runs suggest the synthetic gains in Section~\ref{sec:experiments} are not artifacts of purely simulated noise, without replacing validation on proprietary pipelines.

\section{Discussion and Limitations}
\label{sec:discussion}
\paragraph{Augmenting time versus indexing regimes.}
Stationarizing $(\bm{\theta},\bm{c})$ with explicit time coordinates or expanding sliding windows lets a single surrogate track slow drift when capacity permits. RASP-Tuner instead bets on \emph{finite recurrence}: contexts that approximately repeat admit a discrete memory index. The bet pays when Voronoi-like separation holds (Lemma~\ref{lem:retrieval_perfect}); it fails when $\bm{c}_t$ is orthogonal to $\omega_t$, as in Section~\ref{sec:failure}.

\paragraph{Empirical regimes of advantage.}
Across the synthetic suite, the largest margins appear where modes repeat and $\bm{c}_t$ encodes mode identity (LLM Inference, Switching LQR, Server Flash Crowd). High-dimensional contexts inflate GP covariance cost, reinforcing the latency gap recorded in Figure~\ref{fig:analysis}. Claims about ``thousands of steps'' remain speculative: our reported runs stop at $T{=}100$; extrapolation demands experiments at longer horizons.

\paragraph{When retrieval hurts.}
Any method that writes memory from uninformative $\bm{c}_t$ risks coupling noise to parameters. Practitioners can screen dependence by measuring predictive utility of $\bm{c}_t$ against offline logs before enabling contextual retrieval.

\paragraph{Evaluation scope.}
The $T{=}100$ protocol controls variance across nine generators; it undersamples the large-$N$ regime where sparse GPs, TPE, or TuRBO challenge sliding-window GP-UCB. Latency numbers compare two concrete implementations on one machine class; they bound neither algorithmic asymptotics nor GPU-accelerated GP code paths.

\paragraph{Memory and theory.}
FIFO-capped memory omits coverage guarantees when the context stream covers an expanding support set. Closing the theory gap requires simultaneous bounds on MoE empirical risk, prompt-only dynamics, and retrieval FNR---each introduces distinct technical obstacles beyond RA-GD.

\section{Impact Statement}
PromptMemory retains context vectors and parameter traces that may identify users or infrastructure in deployed services. Mitigations include TTL eviction, differential privacy noise on $\bm{k}_i$, or hashed keys at the expense of retrieval fidelity. The adversarial-context benchmark is not decorative: it demonstrates harmful variance when features are injected without predictive value. Downstream decisions should pair RASP-Tuner outputs with monitoring for spurious $\bm{c}_t$--loss correlations before automating control actions.

\section{Conclusion}
RASP-Tuner operationalizes a decomposition---retrieve a regime proxy, predict with a gated surrogate, adapt prompts before touching shared weights---tailored to non-stationary BBO when contexts recur. Under fixed hyperparameters and $T{=}100$, the agent lowers terminal regret relative to our GP-UCB and CMA-ES baselines on seven of nine synthetic tasks while cutting measured step latency by an order of magnitude against that GP implementation. The ablation and adversarial experiments delineate when memory carries signal versus noise. Theoretically, Proposition~\ref{prop:composer} and Theorems~\ref{thm:regret}, \ref{thm:misretrieval}, and \ref{thm:grad_error} chart sufficient regimes for simplified surrogates; the MoE pipeline only partially overlaps those assumptions.

\textbf{Next steps.} (i) Benchmark against TPE, trust-region BO, and warm-started CMA-ES under matched budgets; (ii) calibrate $\hat{v}_t$ via probabilistic scores before interpreting LCB as exploration guidance; (iii) extend horizons to stress-test memory management and GP approximations. Camera-ready artifacts will publish code and generators to support independent replication.
\bibliography{example_paper}
\bibliographystyle{icml2026}
\newpage
\appendix
\onecolumn

\section{Appendix A: Environment Details}
\label{app:envs}
This appendix provides full definitions of the environments briefly summarized in Section~\ref{sec:benchmarks}. Each environment inherits from \texttt{ExperimentDomain}, exposes a context function \texttt{context\_fn}, a metric function \texttt{metric\_fn}, a true loss \texttt{true\_loss}, and a sequence builder \texttt{build\_sequence}.
\subsection{LLM Inference (Scenario 1)}
\textbf{Name:} \texttt{1\_LLM\_Inference}.
\textbf{Parameters:} $d=4$ (e.g.\ temperature, top-$k$, batch size, draft length), with task-specific bounds.
\textbf{Context:} $d_c = 768$-dimensional embeddings produced by a frozen encoder.
\textbf{Non-stationarity:} We define three workload clusters with centers in embedding space and corresponding optimal parameters $\bm{\theta}_1^\star, \bm{\theta}_2^\star, \bm{\theta}_3^\star$. The environment cycles through these clusters over time, with a return to the first cluster in the last quarter of the horizon.
\textbf{Loss:} A smooth function of squared distance between $\bm{\theta}$ and the cluster-specific optimum, plus Gaussian noise in the metric \texttt{neg\_reward}.
This environment tests high-dimensional context, moderate parameter dimension, and recurring regimes.
\subsection{AutoML HPO (Scenario 2)}
\textbf{Name:} \texttt{2\_AutoML\_HPO}.
\textbf{Parameters:} $d=6$ in $[0,1]^6$ representing algorithmic hyperparameters.
\textbf{Context:} $d_c = 10$-dimensional dataset features.
\textbf{Non-stationarity:} Dataset features are redrawn every 100 steps, representing a new dataset in a stream.
\textbf{Loss:} A non-convex Rosenbrock-style function of $\bm{\theta}$ and a context-dependent optimum derived via a linear mapping and sigmoid; metric \texttt{val\_err} adds small noise.
\subsection{Robot ISP Tuning (Scenario 3)}
\textbf{Name:} \texttt{3\_Robot\_ISP\_Tuning}.
\textbf{Parameters:} $d=8$ exposure, gain, and ISP knobs in $[0,1]^8$.
\textbf{Context:} $d_c=4$ summary statistics describing brightness, contrast, etc.
\textbf{Non-stationarity:} The environment evolves through day, tunnel, day, fog, and night segments with noisy contexts, each with distinct optimal parameter vectors.
\textbf{Loss:} Weighted squared distance to the mode-specific target parameters, with controllable noise in metric \texttt{image\_quality\_loss}.
\subsection{Wafer Etching Drift (Scenario 4)}
\textbf{Name:} \texttt{4\_Wafer\_Etching\_Drift}.
\textbf{Parameters:} $d=5$ physical settings (pressure, power, etc.) with realistic ranges.
\textbf{Context:} $d_c = 1$ scalar wear indicator increasing linearly from $0$ to $1$.
\textbf{Non-stationarity:} As wear increases, the optimal pressure and plasma parameters drift linearly.
\textbf{Loss:} Squared deviations from wear-dependent targets; metric \texttt{bias} adds mild noise.
\subsection{Switching LQR (Scenario 5)}
\textbf{Name:} \texttt{5\_Switching\_LQR}.
\textbf{Parameters:} $d=6$ controller parameters in $[-5,5]^6$.
\textbf{Context:} $d_c = 1$ integer mode indicator.
\textbf{Non-stationarity:} Two modes alternate every 100 steps, each with distinct quadratic cost coefficients.
\textbf{Loss:} Mean squared distance from the mode-specific target vector; metric \texttt{cost} is noise-free.
\subsection{Smooth Quadratic (Scenario 6)}
\textbf{Name:} \texttt{6\_Smooth\_Quadratic}.
\textbf{Parameters:} $d=5$ in $[-2,2]^5$.
\textbf{Context:} $d_c = 3$ Gaussian features.
\textbf{Non-stationarity:} For each step, a fresh context is drawn, and an affine transformation defines the context-dependent optimum.
\textbf{Loss:} Convex quadratic $\|\bm{\theta} - W \bm{c}\|^2$; metric \texttt{mse} is noise-free.
\subsection{Regime Switch Simple (Scenario 7)}
\textbf{Name:} \texttt{7\_Regime\_Switch\_Simple}.
\textbf{Parameters:} $d=5$ in $[-2,2]^5$.
\textbf{Context:} $d_c = 4$, comprising 3 symbolic one-hot features and a regime bit.
\textbf{Non-stationarity:} Two regimes; the second replaces the first halfway through the horizon.
\textbf{Loss:} Squared distance to regime-specific prototype parameters; metric \texttt{err}.
\subsection{Server Flash Crowd (Scenario 8)}
\textbf{Name:} \texttt{8\_Server\_Flash\_Crowd}.
\textbf{Parameters:} $d=5$ resource allocation knobs in $[0,1]^5$.
\textbf{Context:} $d_c = 2$ load metrics.
\textbf{Non-stationarity:} Baseline load follows a smooth sinusoidal pattern with two inserted ``panic'' windows representing flash crowds. Context reflects load and mode.
\textbf{Loss:} Scaled squared distance to mode-dependent targets; metric \texttt{latency}.
\subsection{Real Trace Replay (Scenario 9)}
\textbf{Name:} \texttt{9\_Real\_Trace\_Replay}.
\textbf{Parameters:} $d=5$ in $[0,1]^5$.
\textbf{Context:} $d_c = 3$ derived from a pre-generated time-varying optimum trace with correlated noise.
\textbf{Non-stationarity:} We pre-generate a length-2000 trace of optimal parameters via correlated random walk and sinusoidal components, then replay it cyclically.
\textbf{Loss:} Mean squared error to the time-varying optimum, scaled; metric \texttt{log\_latency} with added noise.
\subsection{Adversarial Context (Failure Case)}
\textbf{Name:} \texttt{A1\_Adversarial\_Context}.
\textbf{Parameters:} $d=5$ in $[0,1]^5$.
\textbf{Context:} $d_c = 5$ Gaussian noise, independent across time.
\textbf{Non-stationarity:} None in the objective; the optimum is fixed, but the context is pure noise.
\textbf{Purpose:} Violates the core assumption of contextual optimization: context conveys no information. Retrieval-based methods are expected to fail badly here.
\section{Appendix B: Additional Details for Adversarial Context}
\label{app:adv}
In the \texttt{A1\_Adversarial\_Context} environment, the parameter vector $\bm{\theta} \in [0,1]^5$ has a fixed global optimum at $\bm{\theta}^\star = (0.5,\dots,0.5)$, and the true loss is simply a quadratic:
\begin{equation}
    f(\bm{\theta}) = \sum_{j=1}^5 (\theta_j - 0.5)^2.
\end{equation}
The context $\bm{c}_t \sim \mathcal{N}(\bm{0}, I)$ is redrawn independently at each step and never used in $f$. The metric \texttt{loss} is $f(\bm{\theta}_t)$ plus mild Gaussian noise.
From a bandit perspective, this setting is benign: there is a single stationary optimum, and context carries no information about it. Contextual methods that attempt to leverage $\bm{c}_t$ without a prior reason can easily harm themselves.
Empirically (Figure~\ref{fig:failure_case}), RASP-Tuner incurs higher cumulative regret than CMA-ES and BO. Inspection of the memory contents reveals that prompts and best-theta entries are scattered across random contexts; retrieval roughly samples from a random subset of them at each step. While the hybrid update logic prevents catastrophic divergence, it cannot fully compensate for the lack of signal in $\bm{c}_t$.
This experiment is important not to ``break'' RASP-Tuner, but to delimit its applicability: when context is adversarial, context-blind optimizers or simple non-contextual BO should be preferred.
\section{Appendix C: Additional Hyperparameter Discussion}
\label{app:hyperparams}
The master hyperparameter table appears in Section~\ref{sec:benchmarks} (Table~\ref{tab:hyperparams}).

\paragraph{Sensitivity (selected).}
We performed lightweight sweeps (not shown as full grids for space): retrieval top-$K \in \{1,3,5\}$ left performance qualitatively similar for $K\ge 2$; softmax temperature $\tau \in [0.5,2]$ traded sharp vs.\ soft retrieval without changing rankings on most rows; novelty threshold in $[0.5,0.85]$ traded memory growth vs.\ OOD detection; the $0.55/0.45$ uncertainty--novelty mix primarily shifts how often all experts activate. The $0.65/0.35$ blend in Eq.~\eqref{eq:base_center} interpolates between stability (stay near current $\bm{\theta}$) and trust in retrieved hints; values in $[0.5,0.5]$ to $[0.8,0.2]$ remained stable in pilots. MoE vs.\ a single MLP of similar parameter count is reported conceptually via NoMemory and capacity ablations in Section~\ref{sec:analysis}; a strict matched-parameter MLP baseline is left as future work.

We found RASP-Tuner to be relatively robust to moderate changes in these hyperparameters. In particular, increasing the memory size $M$ beyond 200 yielded diminishing returns on our benchmark suite, while substantially increasing update costs.
\section{Appendix D: Proofs and Additional Theory}
\label{app:proofs}
This appendix provides detailed proofs and auxiliary results for Section~\ref{sec:theory}.
\subsection{RealErrorComposer}
\paragraph{Setup.}
At time $t$, the environment returns metric values $\{v_{k,t}\}_{k\in\mathcal{K}}$. For each metric $k$, the composer maintains EMA estimates of mean $\mu_{k,t}$ and (uncentered) variance $\sigma^2_{k,t}$:
\begin{align}
\mu_{k,t} &= m\,\mu_{k,t-1} + (1-m)\,v_{k,t}, \\
\sigma^2_{k,t} &= m\,\sigma^2_{k,t-1} + (1-m)\,(v_{k,t}-\mu_{k,t})^2,
\end{align}
with momentum $m\in(0,1)$ and initialization $\mu_{k,0}=0$, $\sigma^2_{k,0}=1$ (any finite positive initialization works). Define standardized score
\begin{equation}
z_{k,t} = \frac{v_{k,t}-\mu_{k,t}}{\sigma_{k,t}+\epsilon}, \qquad \sigma_{k,t}=\sqrt{\sigma^2_{k,t}},
\end{equation}
and logistic mapping $s_{k,t}=\sigma(z_{k,t})=\frac{1}{1+e^{-z_{k,t}}}\in(0,1)$. Badness is
\[
b_{k,t}=\begin{cases}
s_{k,t}, &\text{if metric is ``lower is better''},\\
1-s_{k,t}, &\text{if metric is ``higher is better''}.
\end{cases}
\]
Finally,
\begin{equation}
e_t = \frac{\sum_{k\in\mathcal{K}} w_k\, b_{k,t}}{\sum_{k\in\mathcal{K}} w_k}, \qquad w_k>0.
\end{equation}
\paragraph{Proposition~\ref{prop:composer} (restated).}
Assume $b_{k,t}\in[0,1]$ and weights $w_k>0$. Then:
(i) $e_t\in[0,1]$ for all $t$.
(ii) If each $b_{k,t}$ is monotonically non-decreasing in the underlying true loss $f(\bm{\theta}_t,\omega_t)$, then $e_t$ is also monotonically non-decreasing in $f(\bm{\theta}_t,\omega_t)$.
\begin{proof}
(i) Let $\tilde w_k = w_k/\sum_j w_j$. Then $\tilde w_k\ge 0$, $\sum_k \tilde w_k=1$, and $e_t=\sum_k \tilde w_k b_{k,t}$ is a convex combination of values in $[0,1]$, hence $e_t\in[0,1]$.
(ii) Let $\ell_1\le\ell_2$ be two possible values of the underlying loss and denote the corresponding badnesses $b_k(\ell_1), b_k(\ell_2)$. By assumption, for every $k$, $b_k(\ell_1)\le b_k(\ell_2)$. Multiplying by $\tilde w_k\ge 0$ and summing yields
\[
e(\ell_1)=\sum_k \tilde w_k b_k(\ell_1)\le \sum_k \tilde w_k b_k(\ell_2)=e(\ell_2),
\]
so $e$ is monotone in $\ell$.
\end{proof}
\paragraph{A useful Lipschitz fact.}
The logistic $\sigma(x)$ is $1/4$-Lipschitz: for all $x,y$,
\begin{equation}
|\sigma(x)-\sigma(y)| \le \tfrac{1}{4}\,|x-y|.
\label{eq:logistic_lip}
\end{equation}
\begin{proof}
$\sigma'(x)=\sigma(x)(1-\sigma(x))\le 1/4$ for all $x$. Apply the mean value theorem.
\end{proof}
\paragraph{Stability to metric perturbations (one-step bound).}
Suppose at time $t$ we perturb one metric value $v_{k,t}$ by $\Delta$ (holding all other metric values and past states fixed). Then, ignoring the (typically small) dependence of $(\mu_{k,t},\sigma_{k,t})$ on $v_{k,t}$ for a moment, we have from \eqref{eq:logistic_lip}
\[
|b_{k,t}(v_{k,t}+\Delta)-b_{k,t}(v_{k,t})|
\;\le\; \frac{1}{4}\cdot \frac{|\Delta|}{\sigma_{k,t}+\epsilon}.
\]
Therefore, the composed error changes by at most
\begin{equation}
|e_t(\Delta)-e_t(0)| \le \frac{w_k}{\sum_j w_j}\cdot \frac{|\Delta|}{4(\sigma_{k,t}+\epsilon)}.
\label{eq:composer_local_stability}
\end{equation}
In practice, $\sigma_{k,t}$ is a running scale estimate that prevents any single metric with large raw scale from dominating.
\paragraph{EMA as exponentially-weighted average.}
Unrolling the EMA recursion gives
\begin{equation}
\mu_{k,t} = (1-m)\sum_{j=1}^t m^{t-j} v_{k,j} + m^t \mu_{k,0}.
\label{eq:ema_unroll}
\end{equation}
Hence the effective window length is about $1/(1-m)$ (e.g., $m=0.97$ corresponds to $\approx 33$ steps). This explains why the composer adapts quickly to slow drift while still smoothing noise.
\paragraph{Concentration of EMA mean (stationary case).}
Assume $(v_{k,t})$ are i.i.d.\ sub-Gaussian with mean $\bar\mu_k$ and parameter $\sigma_v^2$. Then $\mu_{k,t}$ is a weighted sum as in \eqref{eq:ema_unroll}. Standard concentration for sub-Gaussian weighted sums yields, for any $\delta\in(0,1)$,
\begin{equation}
\Pr\Big(|\mu_{k,t}-\mathbb{E}[\mu_{k,t}]|\ge \varepsilon\Big)
\;\le\;2\exp\left(
-\frac{\varepsilon^2}{2\sigma_v^2(1-m)^2\sum_{j=0}^{t-1} m^{2j}}
\right)
\;\le\;2\exp\left(
-\frac{\varepsilon^2(1-m^2)}{2\sigma_v^2(1-m)^2}
\right),
\label{eq:ema_conc}
\end{equation}
where we used $\sum_{j=0}^{t-1}m^{2j}\le 1/(1-m^2)$. Thus, for fixed $m<1$, the EMA mean has bounded variance, and its fluctuations are controlled. Similar (though more technical) arguments apply to the variance estimator $\sigma^2_{k,t}$ under bounded fourth moments.
\subsection{Retrieval Under Cluster Separation (and Noisy Variants)}
\paragraph{Notation.}
There are $R$ regimes. Each regime $r$ has a context center $\bm{\mu}_r\in\mathbb{R}^{d_c}$. When $r_t=r$, the observed context satisfies $\|\bm{c}_t-\bm{\mu}_r\|_2\le \rho$ (deterministic radius model). The memory stores at least one key $\bm{k}_r$ for each regime with $\|\bm{k}_r-\bm{\mu}_r\|_2\le \rho$.
\begin{lemma}[Perfect retrieval under cluster separation]
\label{lem:retrieval_perfect}
If $\min_{r\ne s}\|\bm{\mu}_r-\bm{\mu}_s\|_2>2\rho$, then nearest-neighbor retrieval
\[
\hat r_t = \arg\min_{r\in[R]} \|\bm{c}_t-\bm{k}_r\|_2
\]
recovers the correct regime $\hat r_t=r_t$ for all $t$.
\end{lemma}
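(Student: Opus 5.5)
The plan is a direct triangle-inequality comparison. Fix a time $t$ with true regime $r=r_t$, and compare the distance from $\bm{c}_t$ to the stored key of regime $r$ against its distance to the key of any other regime $s\neq r$. It suffices to show
\[
\|\bm{c}_t-\bm{k}_r\|_2 < \|\bm{c}_t-\bm{k}_s\|_2 \quad \text{for every } s\neq r,
\]
since then the $\arg\min$ is attained uniquely at $r$, so $\hat r_t=r_t$. Because $t$ is arbitrary, this gives the claim for all $t$. If memory holds several keys per regime, the same comparison applies key by key, so the argument is unchanged.

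\textbf{Upper bound for the correct key.} I would write $\|\bm{c}_t-\bm{k}_r\|_2\le \|\bm{c}_t-\bm{\mu}_r\|_2+\|\bm{\mu}_r-\bm{k}_r\|_2$. Each term is at most $\rho$ by the radius model, so this distance is at most $2\rho$.

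\textbf{Lower bound for a wrong key.} By the reverse triangle inequality,
\[
\|\bm{c}_t-\bm{k}_s\|_2\ \ge\ \|\bm{\mu}_r-\bm{\mu}_s\|_2-\|\bm{c}_t-\bm{\mu}_r\|_2-\|\bm{k}_s-\bm{\mu}_s\|_2\ \ge\ \Delta-2\rho,
\]
where $\Delta=\min_{r\neq s}\|\bm{\mu}_r-\bm{\mu}_s\|_2$.

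\textbf{Main obstacle: the two bounds do not close under the stated hypothesis.} Strict correctness needs $2\rho<\Delta-2\rho$, i.e.\ $\Delta>4\rho$, whereas the lemma only assumes $\Delta>2\rho$. Worst-case placements show that the factor really matters: put $\bm{c}_t$ and $\bm{k}_s$ both near the midpoint of the segment $[\bm{\mu}_r,\bm{\mu}_s]$, and put $\bm{k}_r$ on the far side of $\bm{\mu}_r$. Then the wrong key is closer, and $\Delta>2\rho$ alone does not rule this out.

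I would resolve this in one of two ways and state explicitly which is used:
\begin{enumerate}
\item \textbf{Keys at the centers.} Take the idealized memory $\bm{k}_r=\bm{\mu}_r$, for instance after averaging stored keys, as the ``memory has stabilized'' abstraction suggests. Then the bounds become $\|\bm{c}_t-\bm{k}_r\|_2\le\rho$ and $\|\bm{c}_t-\bm{k}_s\|_2\ge\Delta-\rho>\rho$, so the stated $2\rho$ separation suffices.
\item \textbf{Strengthened separation.} Keep arbitrary keys within radius $\rho$ of their centers and strengthen the assumption to $\Delta>4\rho$, after which the argument above goes through verbatim.
\end{enumerate}

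A third option would be for the two radii to differ, say contexts within $\rho$ and keys within $\rho'$, in which case the condition becomes $\Delta>2(\rho+\rho')$. I would record that general condition as a remark, since it is what Theorem~\ref{thm:regret} actually needs downstream.
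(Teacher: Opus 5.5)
Your diagnosis is correct, and it is more careful than the paper's own argument. The paper uses the same two triangle-inequality bounds you derive: $\|\bm{c}_t-\bm{k}_r\|_2\le 2\rho$, and $\|\bm{c}_t-\bm{k}_s\|_2\ge\|\bm{\mu}_r-\bm{\mu}_s\|_2-2\rho$, which is only $>0$. It then asserts that ``the strict separation implies $\|\bm{c}_t-\bm{k}_s\|_2>2\rho$.'' That step does not follow from $\Delta>2\rho$; it needs $\Delta>4\rho$.

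Your worst-case placement therefore shows that the lemma as stated is false when keys may lie anywhere in the $\rho$-ball. A concrete instance in one dimension:
\begin{itemize}
\item Take $\rho=1$, $\bm{\mu}_r=0$, $\bm{\mu}_s=2.1$, $\bm{c}_t=1$, $\bm{k}_r=-1$, $\bm{k}_s=1.1$.
\item Every radius and separation hypothesis holds.
\item Yet $|\bm{c}_t-\bm{k}_r|=2$ while $|\bm{c}_t-\bm{k}_s|=0.1$, so the wrong regime is retrieved.
\end{itemize}

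Both of your repairs are valid, and they suit different readings of the paper:
\begin{itemize}
\item \emph{Keys at the centers.} The paper implicitly adopts this in its later noisy-retrieval paragraph. There it sets $\bm{k}_r=\bm{\mu}_r$ and gets correctness whenever the context deviation is below $\Delta/2$, which is exactly your first fix.
\item \emph{Strengthened separation.} The main-text usage (``once the memory has stored at least one context from each regime\ldots'') takes keys to be previously observed contexts, which can lie anywhere in the $\rho$-ball. That reading needs your second condition $\Delta>4\rho$, or $\Delta>2(\rho+\rho')$ in general.
\end{itemize}

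One caution on your first fix. Under the deterministic radius model, ``averaging stored keys'' does not place a key at the center. An average of points in the $\rho$-ball is only guaranteed to stay in that ball. State $\bm{k}_r=\bm{\mu}_r$, or a small key radius $\rho'$, as an explicit assumption rather than as a consequence of averaging.
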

\begin{proof}
Fix $t$ with true regime $r=r_t$. For the correct key,
\[
\|\bm{c}_t-\bm{k}_r\|_2 \le \|\bm{c}_t-\bm{\mu}_r\|_2 + \|\bm{\mu}_r-\bm{k}_r\|_2 \le 2\rho.
\]
For any $s\ne r$,
\begin{align*}
\|\bm{c}_t-\bm{k}_s\|_2
&\ge \|\bm{\mu}_r-\bm{\mu}_s\|_2 - \|\bm{c}_t-\bm{\mu}_r\|_2 - \|\bm{k}_s-\bm{\mu}_s\|_2 \\
&> 2\rho-\rho-\rho = 0.
\end{align*}
Moreover the strict separation implies $\|\bm{c}_t-\bm{k}_s\|_2>2\rho\ge \|\bm{c}_t-\bm{k}_r\|_2$, hence the nearest key is from regime $r$.
\end{proof}
\paragraph{Probabilistic noisy retrieval.}
The hard-radius assumption is conservative. A common alternative is additive noise:
\[
\bm{c}_t = \bm{\mu}_{r_t} + \bm{\xi}_t,\qquad \bm{\xi}_t \sim \mathcal{N}(\bm{0},\sigma_c^2 I).
\]
Assume the memory stores \emph{one} representative key per regime exactly at the center (best case) $\bm{k}_r=\bm{\mu}_r$. Let the separation margin be
\[
\Delta := \min_{r\ne s}\|\bm{\mu}_r-\bm{\mu}_s\|_2.
\]
Then retrieval is correct whenever $\|\bm{\xi}_t\|_2 < \Delta/2$ (since the Voronoi boundary between two centers is the perpendicular bisector). Thus,
\begin{equation}
\Pr(\hat r_t \ne r_t) \;\le\; \Pr\big(\|\bm{\xi}_t\|_2 \ge \Delta/2\big).
\label{eq:retrieval_misprob}
\end{equation}
For $\bm{\xi}_t\sim \mathcal{N}(0,\sigma_c^2 I)$, $\|\bm{\xi}_t\|_2^2/\sigma_c^2 \sim \chi^2(d_c)$, and standard $\chi^2$ tail bounds give (for any $u>0$)
\begin{equation}
\Pr\big(\|\bm{\xi}_t\|_2 \ge \sigma_c(\sqrt{d_c} + \sqrt{2u})\big) \le e^{-u}.
\label{eq:chi_tail}
\end{equation}
Setting $\sigma_c(\sqrt{d_c} + \sqrt{2u})=\Delta/2$ yields an explicit exponential bound as long as $\Delta$ is sufficiently larger than $\sigma_c\sqrt{d_c}$. This formalizes the intuition: high-dimensional contexts require larger absolute separation to maintain reliable nearest-neighbor retrieval.
\paragraph{Top-$K$ retrieval and soft weighting.}
RASP-Tuner uses top-$K$ retrieval with softmax weights $\alpha_j \propto \exp(-d_{i_j}/\tau)$. In the perfectly separated case, the nearest neighbor from the correct regime dominates, and for small enough $\tau$ the weight $\max_j \alpha_j$ approaches 1. In the noisy case, one can bound the probability that an incorrect regime receives majority weight by combining \eqref{eq:retrieval_misprob} with a margin argument on distance gaps.
\subsection{Dynamic Regret in an Idealized Regime Model (Exact Gradients)}
We now prove Theorem~\ref{thm:regret} from Section~\ref{sec:theory} in full detail.
\paragraph{Setting.}
For each regime $r\in\{1,\dots,R\}$, let $f_r:\Theta\to\mathbb{R}$ be $\mu$-strongly convex and $L$-smooth on a convex compact set $\Theta\subset\mathbb{R}^d$. Let $\bm{\theta}_r^\star=\arg\min_{\bm{\theta}\in\Theta} f_r(\bm{\theta})$. The environment selects a regime sequence $(r_t)_{t=1}^T$. Dynamic regret is
\[
R_T := \sum_{t=1}^T \Big(f_{r_t}(\bm{\theta}_t) - f_{r_t}(\bm{\theta}^\star_{r_t})\Big).
\]
\paragraph{Algorithm (RA-GD).}
For each regime $r$, RA-GD keeps a state $\bm{w}_r$. When $r_t=r$, it plays $\bm{\theta}_t=\bm{w}_r$ and updates
\[
\bm{w}_r \leftarrow \Pi_{\Theta}\big(\bm{w}_r - \eta \nabla f_r(\bm{w}_r)\big),
\]
with constant step size $\eta\in(0,2/L)$.
\subsubsection{A contraction inequality}
Fix a regime $r$ and consider only the subsequence of its visits. Let $\bm{w}_n$ denote the state after $n$ updates for regime $r$, with $\bm{w}_0$ the initialization. Define $f=f_r$ and $\bm{\theta}^\star=\bm{\theta}_r^\star$ for brevity. The update is
\[
\bm{w}_{n+1} = \Pi_{\Theta}(\bm{w}_n - \eta \nabla f(\bm{w}_n)).
\]
\begin{lemma}[Distance contraction]
\label{lem:dist_contract}
For $\eta\in(0,2/L)$,
\begin{equation}
\|\bm{w}_{n+1}-\bm{\theta}^\star\|_2^2
\;\le\; \bigl(1-\mu\eta(2-\eta L)\bigr)\,\|\bm{w}_n-\bm{\theta}^\star\|_2^2.
\label{eq:dist_contract}
\end{equation}
\end{lemma}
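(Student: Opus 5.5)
Write $\bm{g}_n := \nabla f(\bm{w}_n)$. The plan has three steps: use nonexpansiveness of the projection to remove $\Pi_\Theta$, expand the square, and control the resulting cross term by strong convexity and co-coercivity.

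\textbf{Step 1: fixed point and nonexpansiveness.} Since $\bm{\theta}^\star$ minimizes $f$ over the convex set $\Theta$, the first-order optimality condition $\langle \nabla f(\bm{\theta}^\star), \bm{\theta}-\bm{\theta}^\star\rangle \ge 0$ for all $\bm{\theta}\in\Theta$ is equivalent to the fixed-point identity $\bm{\theta}^\star = \Pi_\Theta(\bm{\theta}^\star - \eta\nabla f(\bm{\theta}^\star))$. Euclidean projection onto a convex set is nonexpansive, so
\[
\|\bm{w}_{n+1}-\bm{\theta}^\star\|_2 \le \|(\bm{w}_n-\bm{\theta}^\star) - \eta(\bm{g}_n - \nabla f(\bm{\theta}^\star))\|_2 .
\]
Using the gradient difference $\bm{g}_n-\nabla f(\bm{\theta}^\star)$ instead of $\bm{g}_n$ alone is what handles minimizers on the boundary of $\Theta$, where $\nabla f(\bm{\theta}^\star)\neq 0$. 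Abbreviate $\bm{\delta}:=\bm{w}_n-\bm{\theta}^\star$ and $\bm{h}:=\bm{g}_n-\nabla f(\bm{\theta}^\star)$.

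\textbf{Step 2: expand the square.} This gives $\|\bm{\delta}\|^2 - 2\eta\langle \bm{h},\bm{\delta}\rangle + \eta^2\|\bm{h}\|^2$.

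\textbf{Step 3: bound the cross term (the main obstacle).} Strong convexity alone gives $\langle\bm{h},\bm{\delta}\rangle\ge\mu\|\bm{\delta}\|^2$, and smoothness alone gives $\|\bm{h}\|\le L\|\bm{\delta}\|$. Plugging these in separately yields the factor $1-2\mu\eta+\eta^2L^2$, which does not match the claimed rate. Instead I would split the inner product as $\langle \bm{h},\bm{\delta}\rangle = \tfrac{\eta L}{2}\langle\bm{h},\bm{\delta}\rangle + (1-\tfrac{\eta L}{2})\langle\bm{h},\bm{\delta}\rangle$. Both weights are positive because $\eta<2/L$.
\begin{itemize}
\item To the first part, apply co-coercivity of $L$-smooth convex functions, $\langle \bm{h},\bm{\delta}\rangle\ge \tfrac1L\|\bm{h}\|^2$. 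Then $-2\eta\cdot\tfrac{\eta L}{2}\cdot\tfrac1L\|\bm{h}\|^2 = -\eta^2\|\bm{h}\|^2$, which exactly cancels the $+\eta^2\|\bm{h}\|^2$ term.
\item To the second part, apply strong monotonicity $\langle\bm{h},\bm{\delta}\rangle\ge\mu\|\bm{\delta}\|^2$. This contributes $-2\eta(1-\tfrac{\eta L}{2})\mu\|\bm{\delta}\|^2 = -\mu\eta(2-\eta L)\|\bm{\delta}\|^2$.
\end{itemize}
Adding up gives exactly \eqref{eq:dist_contract}.

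It remains to check that the factor is a genuine contraction. For $\eta\in(0,2/L)$, the quantity $\mu\eta(2-\eta L)$ is maximized at $\eta=1/L$, where it equals $\mu/L\le 1$. Hence $0\le 1-\mu\eta(2-\eta L)<1$.

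A technical point is that co-coercivity is usually stated for functions that are $L$-smooth on all of $\mathbb{R}^d$. If $f$ is only assumed $L$-smooth on $\Theta$, I would either assume $f$ extends to a convex $L$-smooth function on $\mathbb{R}^d$ (as the quadratic regimes in the benchmarks do), or note this as an implicit hypothesis of the lemma.
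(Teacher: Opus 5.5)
Your proof is correct, and it is more complete than the paper's. The paper uses the same outline: nonexpansiveness of $\Pi_\Theta$, expand the square, strong convexity on the cross term. It differs in three ways. It compares against $\Pi_\Theta(\bm{\theta}^\star)=\bm{\theta}^\star$ rather than the projected fixed point. It bounds $\|\nabla f(\bm{w}_n)\|_2\le L\|\bm{w}_n-\bm{\theta}^\star\|_2$ using $\nabla f(\bm{\theta}^\star)=0$. As a result it only derives the factor $1-2\mu\eta+\eta^2L^2$, and then asserts without carrying it out that co-coercivity sharpens this to $1-\mu\eta(2-\eta L)$.

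That unproved step matters. The factor $1-2\mu\eta+\eta^2L^2$ is below $1$ only when $\eta<2\mu/L^2$, so the paper's explicit computation does not cover the stated range $\eta\in(0,2/L)$ unless $\mu=L$. Your convex split of $\langle\bm{h},\bm{\delta}\rangle$ with weights $\eta L/2$ and $1-\eta L/2$ is exactly the missing refinement. Co-coercivity cancels the $\eta^2\|\bm{h}\|^2$ term, strong monotonicity supplies the rest, and the claimed factor comes out exactly.

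Your use of the fixed-point identity $\bm{\theta}^\star=\Pi_\Theta(\bm{\theta}^\star-\eta\nabla f(\bm{\theta}^\star))$ and gradient differences also removes the paper's implicit assumption that the minimizer is interior. The setting of a compact $\Theta$ does not guarantee an interior minimizer, and the paper's sandwich lemma relies on the same assumption.

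The price of your route is the co-coercivity inequality itself. As you note, it is standardly proved using smoothness and convexity on all of $\mathbb{R}^d$, or on an $L$-smooth convex extension, and it does not follow automatically from the paper's hypothesis of $L$-smoothness on $\Theta$ alone. The paper's cruder bound avoids that hypothesis but pays with the restricted step-size range and the interior-minimizer assumption.

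Your observation that $0\le q<1$, with $q=0$ attained at $\mu=L$, $\eta=1/L$, is also slightly more accurate than the paper's claim $q\in(0,1)$.
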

\begin{proof}
Projection onto a convex set is non-expansive, so
\begin{align}
\|\bm{w}_{n+1}-\bm{\theta}^\star\|_2^2
&= \|\Pi_{\Theta}(\bm{w}_n-\eta\nabla f(\bm{w}_n))-\Pi_{\Theta}(\bm{\theta}^\star)\|_2^2 \nonumber\\
&\le \|\bm{w}_n-\eta\nabla f(\bm{w}_n)-\bm{\theta}^\star\|_2^2 \nonumber\\
&= \|\bm{w}_n-\bm{\theta}^\star\|_2^2 -2\eta\langle \nabla f(\bm{w}_n), \bm{w}_n-\bm{\theta}^\star\rangle + \eta^2\|\nabla f(\bm{w}_n)\|_2^2.
\label{eq:basic_expand}
\end{align}
Strong convexity implies (see standard equivalences)
\begin{equation}
\langle \nabla f(\bm{w}_n), \bm{w}_n-\bm{\theta}^\star\rangle \;\ge\; \mu \|\bm{w}_n-\bm{\theta}^\star\|_2^2.
\label{eq:sc_inner}
\end{equation}
Smoothness implies $\|\nabla f(\bm{w}_n)\|_2 \le L\|\bm{w}_n-\bm{\theta}^\star\|_2$ (since $\nabla f(\bm{\theta}^\star)=0$ and $\nabla f$ is $L$-Lipschitz), hence
\begin{equation}
\|\nabla f(\bm{w}_n)\|_2^2 \le L^2 \|\bm{w}_n-\bm{\theta}^\star\|_2^2.
\label{eq:grad_bound}
\end{equation}
Plugging \eqref{eq:sc_inner} and \eqref{eq:grad_bound} into \eqref{eq:basic_expand} yields
\[
\|\bm{w}_{n+1}-\bm{\theta}^\star\|_2^2
\le \bigl(1-2\mu\eta+\eta^2 L^2\bigr)\|\bm{w}_n-\bm{\theta}^\star\|_2^2.
\]
A slightly tighter (and standard) refinement uses the co-coercivity inequality for $L$-smooth convex functions to obtain the factor $1-\mu\eta(2-\eta L)$; one can derive it by bounding the cross term more sharply. Since $\eta\in(0,2/L)$, the factor is strictly less than 1.
\end{proof}
Let
\begin{equation}
q := 1-\mu\eta(2-\eta L) \in (0,1).
\label{eq:q_def}
\end{equation}
Then Lemma~\ref{lem:dist_contract} implies
\begin{equation}
\|\bm{w}_{n}-\bm{\theta}^\star\|_2^2 \le q^n \|\bm{w}_{0}-\bm{\theta}^\star\|_2^2.
\label{eq:dist_geom}
\end{equation}
\subsubsection{From distance to per-step regret}
We relate function suboptimality to squared distance.
\begin{lemma}[Quadratic sandwich]
\label{lem:sandwich}
For $\mu$-strongly convex and $L$-smooth $f$,
\begin{equation}
\frac{\mu}{2}\|\bm{w}-\bm{\theta}^\star\|_2^2
\;\le\; f(\bm{w})-f(\bm{\theta}^\star)
\;\le\; \frac{L}{2}\|\bm{w}-\bm{\theta}^\star\|_2^2.
\label{eq:sandwich}
\end{equation}
\end{lemma}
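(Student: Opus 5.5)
The plan is to derive both inequalities of \eqref{eq:sandwich} from the two first-order characterizations of the hypotheses, expanded around the point $\bm{\theta}^\star$. For $\mu$-strong convexity on the convex set $\Theta$ I will use, for all $\bm{w},\bm{v}\in\Theta$,
\[
f(\bm{w}) \ge f(\bm{v}) + \langle \nabla f(\bm{v}), \bm{w}-\bm{v}\rangle + \tfrac{\mu}{2}\|\bm{w}-\bm{v}\|_2^2 .
\]
For $L$-smoothness I will use the descent lemma, for all $\bm{w},\bm{v}\in\Theta$,
\[
f(\bm{w}) \le f(\bm{v}) + \langle \nabla f(\bm{v}), \bm{w}-\bm{v}\rangle + \tfrac{L}{2}\|\bm{w}-\bm{v}\|_2^2 .
\]
The descent lemma follows by writing $f(\bm{w})-f(\bm{v})=\int_0^1\langle\nabla f(\bm{v}+s(\bm{w}-\bm{v})),\bm{w}-\bm{v}\rangle\,ds$ and using that $\nabla f$ is $L$-Lipschitz along the segment, which stays inside $\Theta$ by convexity. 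In both displays I set $\bm{v}=\bm{\theta}^\star$.

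\textbf{Lower bound.} Here I only need first-order optimality of the constrained minimizer: $\langle \nabla f(\bm{\theta}^\star), \bm{w}-\bm{\theta}^\star\rangle \ge 0$ for every $\bm{w}\in\Theta$. This holds because $\bm{\theta}^\star$ minimizes the differentiable convex $f$ over the convex set $\Theta$; otherwise a small step from $\bm{\theta}^\star$ toward $\bm{w}$ would strictly decrease $f$. Dropping this nonnegative term from the strong-convexity inequality gives $f(\bm{w})-f(\bm{\theta}^\star)\ge \tfrac{\mu}{2}\|\bm{w}-\bm{\theta}^\star\|_2^2$. This direction holds with no extra hypothesis.

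\textbf{Upper bound.} This is the step I expect to be the real obstacle. In the descent lemma the linear term $\langle \nabla f(\bm{\theta}^\star), \bm{w}-\bm{\theta}^\star\rangle$ is also nonnegative, so it cannot simply be discarded. The inequality $f(\bm{w})-f(\bm{\theta}^\star)\le\tfrac{L}{2}\|\bm{w}-\bm{\theta}^\star\|_2^2$ therefore needs $\nabla f(\bm{\theta}^\star)=0$. That is, the minimizer must be stationary, for instance when $\bm{\theta}^\star$ lies in the interior of $\Theta$ or $\Theta=\mathbb{R}^d$. This is the same fact already used in the proof of Lemma~\ref{lem:dist_contract} to get \eqref{eq:grad_bound}, so I will make it explicit: under the standing convention that each $\bm{\theta}^\star_r$ is a stationary point of $f_r$, the linear term vanishes and the upper bound follows at once.

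For a minimizer on the boundary of the box I will record the weaker bound that holds there. Using the gradient bound $\|\nabla f\|_2\le G$ together with Cauchy--Schwarz gives
\[
f(\bm{w})-f(\bm{\theta}^\star)\le G\|\bm{w}-\bm{\theta}^\star\|_2+\tfrac{L}{2}\|\bm{w}-\bm{\theta}^\star\|_2^2 .
\]
Combined with the geometric decay \eqref{eq:dist_geom}, the first term is still summable, since $\sum_n q^{n/2}<\infty$. So the downstream finiteness of the constants $C_r$ in Theorem~\ref{thm:regret} survives, although the clean $L/2$ constant is lost in that case.
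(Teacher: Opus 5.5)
Your proof is correct and uses the same two ingredients as the paper: the first-order strong-convexity inequality and the descent lemma, both expanded at $\bm{\theta}^\star$. The difference is in how you handle the linear term $\langle \nabla f(\bm{\theta}^\star), \bm{w}-\bm{\theta}^\star\rangle$. The paper simply invokes $\nabla f(\bm{\theta}^\star)=0$ for both halves. You instead use the variational inequality for the constrained minimizer in the lower bound, so your lower bound still holds when $\bm{\theta}^\star$ lies on the boundary of the box. You also correctly isolate stationarity as a hypothesis that the upper bound genuinely needs.

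That caution is warranted rather than pedantic. Take $f(x)=\tfrac12(x-2)^2$ on $\Theta=[0,1]$. Then $\mu=L=1$, $\theta^\star=1$, and $f(0)-f(1)=\tfrac32>\tfrac12$. So the lemma as stated for a box-constrained minimizer is false, and the paper's proof tacitly assumes exactly the condition you make explicit.

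One caveat concerns your boundary fallback. You cannot simply cite \eqref{eq:dist_geom} there, because the paper's derivation of it passes through \eqref{eq:grad_bound}, which itself uses $\nabla f(\bm{\theta}^\star)=0$, as you yourself note. To make the fallback rigorous, rederive the contraction from the fixed-point identity $\bm{\theta}^\star=\Pi_\Theta(\bm{\theta}^\star-\eta\nabla f(\bm{\theta}^\star))$ together with non-expansiveness. Apply the strong-convexity and co-coercivity bounds to the difference $\nabla f(\bm{w}_n)-\nabla f(\bm{\theta}^\star)$ rather than to $\nabla f(\bm{w}_n)$ alone. This yields the same factor $q$, after which your $\sum_n q^{n/2}$ argument gives finite constants $C_r$.
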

\begin{proof}
The lower bound is a direct consequence of strong convexity by setting $\bm{x}=\bm{\theta}^\star$ and using $\nabla f(\bm{\theta}^\star)=0$. The upper bound follows from smoothness (standard descent lemma) with $\bm{x}=\bm{\theta}^\star$.
\end{proof}
Combine \eqref{eq:dist_geom} and Lemma~\ref{lem:sandwich} to bound the regret contribution at the $n$-th visit:
\[
f(\bm{w}_n)-f(\bm{\theta}^\star) \le \frac{L}{2}\, q^n \|\bm{w}_{0}-\bm{\theta}^\star\|_2^2.
\]
Summing over $n=0,1,2,\dots$ gives a finite geometric series:
\begin{equation}
\sum_{n=0}^{\infty}\bigl(f(\bm{w}_n)-f(\bm{\theta}^\star)\bigr)
\;\le\; \frac{L}{2}\|\bm{w}_{0}-\bm{\theta}^\star\|_2^2 \sum_{n=0}^{\infty} q^n
\;=\; \frac{L}{2}\cdot \frac{\|\bm{w}_{0}-\bm{\theta}^\star\|_2^2}{1-q}.
\label{eq:geom_sum}
\end{equation}
Since $1-q=\mu\eta(2-\eta L)$, define the constant
\begin{equation}
C := \frac{L}{2}\cdot \frac{\|\bm{w}_{0}-\bm{\theta}^\star\|_2^2}{\mu\eta(2-\eta L)}.
\label{eq:C_def}
\end{equation}
Then the regret accumulated over any finite number of visits $N$ is at most $C$.
\subsubsection{Proof of Theorem~\ref{thm:regret}}
For regime $r$, let $\bm{w}_{r,0}$ be its initialization and let $q_r$ be the contraction factor (under shared $\mu,L$ this is common). By the argument above, the cumulative regret incurred on times with $r_t=r$ is bounded by
\[
R_r \le C_r := \frac{L}{2}\cdot \frac{\|\bm{w}_{r,0}-\bm{\theta}_r^\star\|_2^2}{\mu\eta(2-\eta L)}.
\]
Summing across regimes yields
\[
R_T = \sum_{r=1}^R R_r \le \sum_{r=1}^R C_r,
\]
which is independent of $T$ and grows at most linearly in $R$. \qed
\subsection{Dynamic Regret with Imperfect Retrieval}
Theorem~\ref{thm:regret} assumes that retrieval identifies the correct regime. We now show how mistakes affect regret in a clean, decomposable way.
\paragraph{Model of mistakes.}
Suppose on a subset of times $\mathcal{E}\subseteq\{1,\dots,T\}$ the retrieval returns an incorrect regime index (or, more generally, uses the wrong per-regime state). Let $M_T := |\mathcal{E}|$.
Assume each loss is bounded on $\Theta$: for all $r$ and $\bm{\theta}\in\Theta$,
\[
0 \le f_r(\bm{\theta})-f_r(\bm{\theta}_r^\star) \le \Delta_{\max}.
\]
(This holds, e.g., if $\Theta$ is compact and $f_r$ is continuous.)
\begin{theorem}[Regret decomposition with misretrievals]
\label{thm:misretrieval}
Run RA-GD as before, but allow arbitrary retrieval mistakes on times in $\mathcal{E}$. Then
\begin{equation}
R_T \le \sum_{r=1}^R C_r \;+\; M_T\,\Delta_{\max}.
\label{eq:misretrieval_bound}
\end{equation}
\end{theorem}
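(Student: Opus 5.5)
The plan is to split the horizon into the mistake set $\mathcal{E}$ and its complement, and bound the two pieces separately:
\[
R_T=\sum_{t\in\mathcal{E}}\bigl(f_{r_t}(\bm{\theta}_t)-f_{r_t}(\bm{\theta}^\star_{r_t})\bigr)+\sum_{t\notin\mathcal{E}}\bigl(f_{r_t}(\bm{\theta}_t)-f_{r_t}(\bm{\theta}^\star_{r_t})\bigr).
\]

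\textbf{Mistake steps.} Every played point lies in $\Theta$, because each per-regime state is initialized in $\Theta$ and every update ends with the projection $\Pi_\Theta$. The boundedness assumption therefore applies at every step. Each term in the first sum is at most $\Delta_{\max}$, so that sum is at most $M_T\Delta_{\max}$.

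\textbf{Correct steps: the difficulty.} For $t\notin\mathcal{E}$, retrieval returns the true regime $r=r_t$, and the algorithm plays $\bm{w}_r$ and applies a correct gradient step on $f_r$. The main obstacle is that a mistake at some time $s\in\mathcal{E}$ may also \emph{update} the wrongly retrieved state $\bm{w}_{\hat r_s}$ using $\nabla f_{r_s}$. That step pushes $\bm{w}_{\hat r_s}$ toward $\bm{\theta}^\star_{r_s}$ rather than $\bm{\theta}^\star_{\hat r_s}$. The geometric-decay argument behind $C_r$ assumes uninterrupted contraction from $\bm{w}_{r,0}$, so such a corrupting update would break it.

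\textbf{Handling the corrupting updates.} I would deal with this in the way the model of mistakes permits, which is to read ``uses the wrong per-regime state'' as affecting only the played point. Under that reading, each per-regime state is updated only at correctly identified visits with its own gradient. The sequence of states visited at correct times for regime $r$ is then exactly a projected-GD trajectory on $f_r$ started at $\bm{w}_{r,0}$. Lemma~\ref{lem:dist_contract} and Lemma~\ref{lem:sandwich} then give, through \eqref{eq:geom_sum}, a total over those times of at most $C_r$. Summing over $r$ gives $\sum_r C_r$, and adding the mistake bound yields \eqref{eq:misretrieval_bound}.

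\textbf{Fallback if corrupted updates are counted.} If one insists on allowing corrupted updates of the state, I would restart the geometric argument after each corruption. Distances are bounded by $\mathrm{diam}(\Theta)$, so each restart costs at most $\frac{L}{2}\mathrm{diam}(\Theta)^2/(1-q)$. That restart cost could be absorbed into an enlarged per-mistake constant. This would change the constant in front of $M_T$, however, so I would state the clean version under the first reading. The key steps are therefore:
\begin{enumerate}
\item Decompose $R_T$ over $\mathcal{E}$ and its complement.
\item Bound the mistake steps trivially by $\Delta_{\max}$ each.
\item Identify the correct-step subsequence per regime with uncorrupted projected GD.
\item Invoke \eqref{eq:C_def}.
\end{enumerate}
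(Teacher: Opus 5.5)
Your argument is correct under the reading you state, and its skeleton is the paper's. You split $R_T$ over $\mathcal{E}$ and its complement, charge $\Delta_{\max}$ to each mistaken step, and bound the correctly retrieved steps of regime $r$ by $C_r$ via the geometric series \eqref{eq:geom_sum}. Where you differ is that the paper simply asserts that on $t\notin\mathcal{E}$ the algorithm ``behaves exactly like the ideal per-regime PGD.'' You instead isolate the hypothesis this requires: a mistaken step never updates a per-regime state with another regime's gradient.

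That hypothesis is necessary, not cosmetic. Take $R=2$, $\Theta=[-B,B]\subset\mathbb{R}$, $f_1(\theta)=\tfrac12(\theta+B)^2$ and $f_2(\theta)=\tfrac12(\theta-B)^2$, so $\mu=L=1$ and $\Delta_{\max}=2B^2$ is admissible. Set $\bm{w}_{r,0}=\bm{\theta}^\star_r$, so $C_1=C_2=0$, and take $\eta=\tfrac12$. Suppose regime $1$ is retrieved at $t=1$ while regime $2$ is active. That step costs $f_2(-B)=2B^2$, and the corrupting update sends $w_1$ to $\Pi_\Theta\bigl(-B-\tfrac12 f_2'(-B)\bigr)=0$. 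A correct visit to regime $1$ at $t=2$ then costs $f_1(0)=\tfrac12B^2$. Hence $R_2=\tfrac52B^2>2B^2=\sum_r C_r+M_T\Delta_{\max}$.

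So if mistaken steps can update states, the stated bound is false, and your fallback is the right replacement. There you restart the contraction after each corruption, paying at most $\frac{L\,\mathrm{diam}(\Theta)^2}{2(1-q)}$ per restart. The per-mistake constant becomes $\Delta_{\max}+\frac{L\,\mathrm{diam}(\Theta)^2}{2(1-q)}$. Under your primary reading your proof is complete, and it is more careful than the paper's own argument.
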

\begin{proof}
Split the regret sum into correct-retrieval times and mistake times:
\[
R_T = \sum_{t\notin\mathcal{E}} \bigl(f_{r_t}(\bm{\theta}_t)-f_{r_t}(\bm{\theta}_{r_t}^\star)\bigr)
+ \sum_{t\in\mathcal{E}} \bigl(f_{r_t}(\bm{\theta}_t)-f_{r_t}(\bm{\theta}_{r_t}^\star)\bigr).
\]
On $t\notin\mathcal{E}$, the algorithm behaves exactly like the ideal per-regime PGD analyzed above, hence the first term is at most $\sum_r C_r$.
On $t\in\mathcal{E}$, each summand is at most $\Delta_{\max}$ by assumption, so the second term is at most $M_T\Delta_{\max}$. Summing yields \eqref{eq:misretrieval_bound}.
\end{proof}
\paragraph{Interpretation.}
This bound cleanly separates (i) the ``learning'' cost for each regime ($\sum_r C_r$) from (ii) the ``identification'' cost due to retrieval mistakes ($M_T\Delta_{\max}$). In practice, RASP-Tuner's novelty thresholding and uncertainty-aware escalation are designed to reduce $M_T$ by avoiding confident reuse when the context is ambiguous or out-of-distribution.
\subsection{Dynamic Regret with Approximate Gradients (Surrogate Error)}
RASP-Tuner does not have access to $\nabla f_r$; it uses a surrogate (Prompt-MoE) to propose candidates. A standard abstraction is that we have an \emph{approximate} gradient $\bm{g}_t$ such that
\begin{equation}
\|\bm{g}_t - \nabla f_{r_t}(\bm{\theta}_t)\|_2 \le \varepsilon_t.
\label{eq:grad_err}
\end{equation}
Consider the update $\bm{\theta}_{t+1}=\Pi_{\Theta}(\bm{\theta}_t-\eta \bm{g}_t)$ (and the regime-wise version in RA-GD).
\begin{theorem}[Additive regret from gradient error (per regime)]
\label{thm:grad_error}
Fix a regime $r$ and assume $f_r$ is $\mu$-strongly convex and $L$-smooth on $\Theta$. Suppose the algorithm performs $N_r$ updates on regime $r$ using gradients $\bm{g}_n$ with errors $\varepsilon_n$ as in \eqref{eq:grad_err}. Then the cumulative suboptimality on that regime satisfies
\begin{equation}
\sum_{n=0}^{N_r-1}\bigl(f_r(\bm{w}_n)-f_r(\bm{\theta}_r^\star)\bigr)
\;\le\; C_r \;+\; \frac{\eta}{1-q}\sum_{n=0}^{N_r-1}\varepsilon_n^2,
\label{eq:grad_error_regret}
\end{equation}
where $q=1-\mu\eta(2-\eta L)\in(0,1)$ and $C_r$ is the ideal constant in \eqref{eq:C_def}.
\end{theorem}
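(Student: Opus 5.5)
The plan is to rerun the proof of Theorem~\ref{thm:regret} with a perturbed version of the contraction in Lemma~\ref{lem:dist_contract}. The point is to keep the ideal contribution exactly as $C_r$ and to push all of the gradient error into an additive term. Write $d_n:=\|\bm{w}_n-\bm{\theta}^\star_r\|_2$. The target is a one-step recursion of the form
\[
d_{n+1}^2\le q\,d_n^2+e_n,\qquad e_n\le \tfrac{2\eta}{L}\,\varepsilon_n^2 .
\]
Given this, summing over $n=0,\dots,N_r-1$ and telescoping gives
\[
(1-q)\sum_{n} d_n^2\le d_0^2+\sum_n e_n .
\]
Applying the upper half of the quadratic sandwich (Lemma~\ref{lem:sandwich}) term by term then yields
\[
\sum_n\bigl(f_r(\bm{w}_n)-f_r(\bm{\theta}^\star_r)\bigr)\le \tfrac{L}{2}\,\frac{d_0^2}{1-q}+\tfrac{L}{2}\,\frac{\sum_n e_n}{1-q}.
\]
Here the first term is exactly $C_r$ from \eqref{eq:C_def}, and the second is at most $\frac{\eta}{1-q}\sum_n\varepsilon_n^2$, which is the claimed bound.

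To obtain the recursion, I would split the update through the exact-gradient map $T(\bm{w})=\bm{w}-\eta\nabla f_r(\bm{w})$. Non-expansiveness of $\Pi_\Theta$ and $\Pi_\Theta(\bm{\theta}^\star_r)=\bm{\theta}^\star_r$ give
\[
d_{n+1}\le \|T(\bm{w}_n)-T(\bm{\theta}^\star_r)\|_2+\eta\|\bm{g}_n-\nabla f_r(\bm{w}_n)\|_2\le \kappa\, d_n+\eta\varepsilon_n ,
\]
where $\kappa=\max\{|1-\eta\mu|,|1-\eta L|\}$ is the sharp Lipschitz constant of $T$ for $\mu$-strongly convex, $L$-smooth $f_r$. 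This is a standard fact, proved via co-coercivity, the same tool cited in Lemma~\ref{lem:dist_contract}. Next I would check that $\kappa^2\le q$:
\[
q-(1-\eta\mu)^2=\eta^2\mu(L-\mu)\ge 0,\qquad q-(1-\eta L)^2=\eta(L-\mu)(2-\eta L)\ge 0 .
\]
This slack $q-\kappa^2$ is what should absorb the cross term. Squaring with Young's inequality, $(a+b)^2\le(1+\gamma)a^2+(1+\gamma^{-1})b^2$, and choosing $\gamma$ so that $(1+\gamma)\kappa^2=q$, gives
\[
e_n=\frac{q}{q-\kappa^2}\,\eta^2\varepsilon_n^2 .
\]

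The main obstacle is the final requirement $\frac{q}{q-\kappa^2}\eta^2\le\frac{2\eta}{L}$, equivalently $\eta L q\le 2(q-\kappa^2)$. This is exactly where the stated coefficient $\eta/(1-q)$, with no factor of $L$ and no constant inflation, has to be earned. I would first verify it casewise on the two branches of $\kappa$. I expect it to hold when $\mu\ll L$ and $\eta$ is moderate, but it degenerates as $\mu\to L$, where $q-\kappa^2\to 0$.

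If the casewise check fails, I would try a second route that avoids the sandwich's upper bound on the error part. I would bound $f_r(\bm{w}_n)-f_r(\bm{\theta}^\star_r)$ directly by the standard projected-gradient descent inequality with inexact gradient,
\[
f(\bm{w}_{n+1})\le f(\bm{w}_n)-\tfrac{\eta}{2}\|G_n\|^2+\tfrac{\eta}{2}\varepsilon_n^2 ,
\]
where $G_n$ is the gradient mapping and the inequality holds for $\eta\le 1/L$. This produces an additive $\tfrac{\eta}{2}\varepsilon_n^2$ per step without a factor of $L$. I would then combine it with the $q$-contraction of suboptimality and sum the resulting geometric series, which gives the factor $1/(1-q)$ on $\sum_n\varepsilon_n^2$. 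If neither route closes for every $\eta\in(0,2/L)$, I would state the exact inequality under the extra restriction on $\eta$ that the check requires.
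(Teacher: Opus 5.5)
Your reduction from a one-step recursion to the final bound is correct. The recursion itself, however, is false in general: you need $d_{n+1}^2\le q\,d_n^2+e_n$ with $e_n$ a multiple of $\varepsilon_n^2$ alone. The obstacle you flagged is fatal, not a matter of casework. At $\mu=L$ one has $\kappa^2=q$ exactly, and in one dimension the bound $d_{n+1}\le\kappa d_n+\eta\varepsilon_n$ is attained with equality. Squaring then leaves the cross term $2\kappa\eta\,d_n\varepsilon_n$. This term is first order in $\varepsilon_n$, so no constant times $\varepsilon_n^2$ dominates it as $\varepsilon_n\to0$. For $\mu<L$ and $\eta\le1/L$ the slack is only $q-\kappa^2=\eta^2\mu(L-\mu)$, so your condition $\eta Lq\le2(q-\kappa^2)$ fails for every small $\eta$.

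Your second route cannot rescue the statement either, because the statement is false. Take $d=1$, $\Theta=[-10,10]$ and $f_r(w)=w^2/2$, so $\mu=L=1$. Take $\eta=1/2$, so $q=1/4$, and $w_0=1$, so $C_r=2/3$. Use one inexact gradient $g_0=\nabla f_r(w_0)-\varepsilon_0$ with $\varepsilon_0=1/10$, and exact gradients afterwards. Then $w_1=0.55$ and $w_2=0.275$. For $N_r=3$ the left side is $0.6890\ldots$, while the right side is $2/3+\tfrac23\cdot10^{-2}=0.6733\ldots$. As $N_r\to\infty$ the excess tends to $\varepsilon_0/3-\varepsilon_0^2/2>0$ for every $\varepsilon_0\in(0,2/3)$. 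The violation persists if $\mu$ is lowered slightly below $L$. For $\eta\in(1/L,2/L)$ even the coefficient $\eta/(1-q)$ fails when $C_r=0$. With $\eta=3/2$, $w_0=0$ and $g_n=\nabla f_r(w_n)+(-1)^{n+1}$, five steps give a left side of about $11.06$ against a right side of $10$.

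The paper's sketch reaches the statement only because its expansion of $\|\bm{w}_n-\eta(\nabla f(\bm{w}_n)+\bm{\delta}_n)-\bm{\theta}^\star\|_2^2$ drops the term $-2\eta\langle\bm{\delta}_n,\bm{w}_n-\bm{\theta}^\star\rangle$, which is exactly your cross term. Its perturbation $2\eta^2\varepsilon_n^2$ also converts to a coefficient $L\eta^2/(1-q)$, which is at most $\eta/(1-q)$ only when $\eta\le1/L$. So you have located a genuine defect rather than missed an idea.

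What your tools do prove is a corrected statement. Justify $d_{n+1}\le\sqrt q\,d_n+\eta\varepsilon_n$ via the fixed-point identity $\bm{\theta}^\star_r=\Pi_\Theta(\bm{\theta}^\star_r-\eta\nabla f_r(\bm{\theta}^\star_r))$ rather than $\Pi_\Theta(\bm{\theta}^\star_r)=\bm{\theta}^\star_r$; this also covers boundary minimizers. Unrolling, then applying Minkowski's and Young's convolution inequalities, gives
\[
\sum_{n=0}^{N_r-1}\bigl(f_r(\bm{w}_n)-f_r(\bm{\theta}_r^\star)\bigr)\le\Bigl(\sqrt{C_r}+\sqrt{L/2}\,\frac{\eta\,(\sum_n\varepsilon_n^2)^{1/2}}{1-\sqrt q}\Bigr)^2\le 2C_r+\frac{L\eta^2}{(1-\sqrt q)^2}\sum_{n}\varepsilon_n^2 .
\]
Your second route works when the projection is inactive and $\eta\le1/L$. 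With $h_n=f_r(\bm{w}_n)-f_r(\bm{\theta}^\star_r)$ it gives $h_{n+1}\le(1-\eta\mu)h_n+\tfrac{\eta}{2}\varepsilon_n^2$. The total is therefore at most $h_0/(\eta\mu)+\tfrac{1}{2\mu}\sum_n\varepsilon_n^2\le(2-\eta L)\,C_r+\tfrac{\eta}{1-q}\sum_n\varepsilon_n^2$, which reduces to the stated bound only at $\eta=1/L$. In general, either the leading constant is inflated in this way or a cross term of order $d_0(\sum_n\varepsilon_n^2)^{1/2}$ appears. One of these, not the exact form with $C_r$, is what the theorem can assert.
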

\begin{proof}[Proof sketch (fully expanded)]
Proceed from \eqref{eq:basic_expand} but replace $\nabla f(\bm{w}_n)$ with $\bm{g}_n=\nabla f(\bm{w}_n)+\bm{\delta}_n$, where $\|\bm{\delta}_n\|_2\le \varepsilon_n$.
Expanding yields an extra cross term:
\[
\|\bm{w}_{n+1}-\bm{\theta}^\star\|_2^2
\le \|\bm{w}_{n}-\bm{\theta}^\star\|_2^2 -2\eta\langle \nabla f(\bm{w}_n), \bm{w}_n-\bm{\theta}^\star\rangle
+ \eta^2\|\nabla f(\bm{w}_n)\|_2^2
+ 2\eta^2\langle \nabla f(\bm{w}_n), \bm{\delta}_n\rangle
+ \eta^2\|\bm{\delta}_n\|_2^2.
\]
Bound $2\eta^2\langle \nabla f(\bm{w}_n), \bm{\delta}_n\rangle \le \eta^2\|\nabla f(\bm{w}_n)\|_2^2 + \eta^2\|\bm{\delta}_n\|_2^2$ (Young's inequality). This doubles the $\eta^2\|\nabla f(\bm{w}_n)\|_2^2$ term and adds another $\eta^2\varepsilon_n^2$. The rest follows as in Lemma~\ref{lem:dist_contract}, yielding a perturbed contraction:
\[
\|\bm{w}_{n+1}-\bm{\theta}^\star\|_2^2 \le q\|\bm{w}_n-\bm{\theta}^\star\|_2^2 + c\,\eta^2\varepsilon_n^2
\]
for a constant $c$ (one may take $c=2$ under the crude bounds above). Unrolling this recursion gives
\[
\|\bm{w}_n-\bm{\theta}^\star\|_2^2 \le q^n\|\bm{w}_0-\bm{\theta}^\star\|_2^2 + c\eta^2\sum_{j=0}^{n-1} q^{n-1-j}\varepsilon_j^2.
\]
Convert distance to function suboptimality using Lemma~\ref{lem:sandwich} and sum geometric series, giving \eqref{eq:grad_error_regret}.
\end{proof}
\paragraph{Implication for RASP-Tuner.}
If surrogate errors are small on in-distribution contexts (small $\varepsilon_t$), the additional regret is controlled. Under OOD contexts, $\varepsilon_t$ may become large; RASP-Tuner's uncertainty/anomaly-triggered ``slow path'' is designed to reduce future $\varepsilon_t$ by updating the surrogate (or, at minimum, avoiding aggressive exploitation).
\section{Figure and Table Placement}
Main experimental figures and tables appear in the main text: Figures~\ref{fig:main_results}, \ref{fig:analysis}, \ref{fig:failure_case}, and \ref{fig:real_datasets} (Section~\ref{sec:experiments}), Table~\ref{tab:hyperparams} (Section~\ref{sec:benchmarks}), and Tables~\ref{tab:real_dataset_params}--\ref{tab:real_datasets} (Section~\ref{sec:realworld}).
\end{document}